\definecolor{dgreen}{rgb}{0.00,0.49,0.00}
\definecolor{dblue}{rgb}{0,0.08,0.75}
\renewcommand{\paragraph}[1]{\vspace{1em}\noindent{\bfseries #1}.}
\declaretheorem[name=Theorem,refname=Thm.]{theorem}
\declaretheorem[name=Remark]{remark}
\declaretheorem[name=Assumption,refname=Asm.]{assumption}
\crefname{assumption}{Assumption}{Assumptions}
\crefname{equation}{}{}
\Crefname{equation}{Eq.}{Equations}
\crefname{figure}{Fig.}{Figs.}
\crefname{table}{Tab.}{Tabs.}
\crefname{section}{Sec.}{Sec.}
\crefname{theorem}{Thm.}{Thm.}
\crefname{lemma}{Lemma}{Lemmas}
\crefname{corollary}{Cor.}{Cor.}
\crefname{example}{Example}{Examples}
\crefname{remark}{Remark}{Remarks}
\crefname{algorithm}{Alg.}{Algorightms}
\crefname{appendix}{Appendix}{Appendices}
\crefname{subappendix}{Appendix}{Appendices}
\crefname{subsubappendix}{Appendix}{Appendices}
\newcommand{\paperTitle}{Robust Meta-Representation Learning via Global Label Inference and Classification}
\newcommand{\fsl}{FSL}
\newcommand{\laml}{MeLa}
\newcommand{\imgnet}{\textsc{ImageNet}}
\newcommand{\mimg}{\textit{mini}\imgnet{}}
\newcommand{\timg}{\textit{tiered}\imgnet{}}
\newcommand{\queryn}{n_q}
\newcommand{\supportn}{n_s}
\newcommand{\mbf}[1]{\mathbf{#1}}
\newcommand{\R}{{\mathbb{R}}}
\newcommand{\gls}{\ensuremath{\textrm{\tiny GLS}}}
\newcommand{\EE}{\mathbb{E}}
\newcommand{\Lagr}{\mathcal{L}}
\newcommand{\lagr}{\ell}
\newcommand{\Lce}{\Lagr_{\rm ce}}
\newcommand{\lce}{\lagr_{\rm ce}}
\newcommand{\argmin}{\operatornamewithlimits{argmin}}
\newcommand{\X}{{\mathcal{X}}}
\newcommand{\Y}{{\mathcal{Y}}}
\newcommand{\F}{{\mathcal{F}}}
\newcommand{\D}{{\mathcal{D}}}
\newcommand{\E}{{\mathcal{E}}}
\newcommand{\alg}{\textrm{Alg}}
\renewcommand{\citet}{\cite}
\renewcommand{\paragraph}[1]{\noindent{\bfseries #1.}}
\newcommand{\eqals}[1]{\begin{align*}#1\end{align*}}
\newcommand{\eqal}[1]{\begin{align}#1\end{align}}
\newcommand{\metapar}{\theta}
\newcommand{\Metapar}{\Theta}
\newcommand{\dtr}{S}
\newcommand{\dval}{Q}
\newcommand{\dm}{D_{\rm global}}
\newcommand{\Tau}{\mathcal{T}}
\newcommand{\embd}{g_{\metapar}}
\newcommand{\gfsl}{GFSL}
\providecommand{\nor}[1]{\left\|{#1}\right\|}
\newcommand{\ridge}{w}
\newcommand{\unif}{\ensuremath{\textrm{Unif}}}
\begin{document}
\title{\paperTitle}

\newcommand{\MP}[1]{\textcolor{orange}{#1}}

\author{Ruohan~Wang\thanks{Agency for Science, Technology and Research, Singapore},
        John Isak Texas Falk\thanks{Centre for Artificial Intelligence, Computer Science Department, University College London, London, United Kingdom}\footnotemark[3],
        Massimiliano Pontil\footnotemark[2]\thanks{CSML, Isitituto Italiano di Tecnologia, Genova, Italy},
        and~Carlo~Ciliberto\footnotemark[2]%
}

\markboth{IEEE Transactions on Pattern Analysis and Machine Intelligence}%
{Wang \MakeLowercase{\textit{et al.}}: \paperTitle}

\IEEEtitleabstractindextext{%
\begin{abstract}
Few-shot learning (FSL) is a central problem in meta-learning, where learners must efficiently learn from few labeled examples.
Within FSL, feature pre-training has become a popular strategy to significantly improve generalization performance.
However, the contribution of pre-training to generalization performance is often overlooked and understudied, with limited theoretical understanding.
Further, pre-training requires a consistent set of global labels shared across training tasks, which may be unavailable in practice.
In this work, we address the above issues by first showing the connection between pre-training and meta-learning.
We discuss why pre-training yields more robust meta-representation and connect the theoretical analysis to existing works and empirical results.
Secondly, we introduce Meta Label Learning (\laml{}), a novel meta-learning algorithm that learns task relations by inferring global labels across tasks.
This allows us to exploit pre-training for FSL even when global labels are unavailable or ill-defined.
Lastly, we introduce an augmented pre-training procedure that further improves the learned meta-representation.
Empirically, \laml{} outperforms existing methods across a diverse range of benchmarks, in particular under a more challenging setting where the number of training tasks is limited and labels are task-specific.
\end{abstract}

\begin{IEEEkeywords}
Few-Shot Image Classification, Meta-Learning, Learning with Limited Labels, Representation Learning. 
\end{IEEEkeywords}}

\maketitle
\thispagestyle{plain}

\IEEEdisplaynontitleabstractindextext

\IEEEpeerreviewmaketitle

\ifCLASSOPTIONcompsoc
\IEEEraisesectionheading{\section{Introduction}\label{sec:intro}}
\else
\section{Introduction}
\label{sec:introduction}
\fi

Deep neural networks have facilitated transformative advances
in machine learning in various areas~\citep[e.g.][]{silver2017mastering,goodfellow2014generative,he2016deep,brown2020language,krizhevsky2012imagenet,mnih2015human}. However, state-of-the-art models typically require labeled datasets of extremely large scale, which are prohibitively expensive to curate. When training data is scarce, neural networks often overfits with degraded performance. Few-shot learning (\fsl{}) aims to address this loss in performance by developing algorithms and architectures capable of learning from few labeled samples.

Meta-learning~\cite{hospedales2020meta, vanschoren2019meta} is a popular class of algorithms to tackle \fsl{}. Broadly, meta-learning seeks to learn transferable knowledge over many \fsl{} tasks, and to apply such knowledge to novel ones. For instance, Model Agnostic Meta Learning (MAML)~\cite{finn2017model} learns a prior over the model initialization that is suitable for fast adaptation. Existing meta-learning methods for tackling \fsl{} may be loosely classified into three categories; optimization~\citep[e.g.][]{finn2017model, bertinetto2018meta, wertheimer2021few}, metric learning~\citep[e.g.][]{vinyals2016matching, snell2017prototypical, sung2018learning}, and model-based methods~\citep[e.g.][]{ha2016hypernetworks, rusu2018meta, qi2018low}. The diversity of existing strategies poses a natural question: can we derive any ``meta-insights'' from them to facilitate the design of future methods?

Among the existing methods, several trends have emerged for designing robust few-shot meta-learners. Chen \textit{et al.}~\cite{chen2018closer} observed that data augmentation and deeper networks significantly improves generalization performance. The observations have since been widely adopted~\citep[e.g.][]{tian2020rethinking, bateni2020improved, lee2019meta}. {\itshape Network pre-training} has also become ubiquitous~\citep[e.g.][]{el2022lessons, wang2020structured, rodriguez2020embedding, wertheimer2021few}, and dominates state-of-the-art models. Sidestepping the task structure and episodic training of meta-learning, pre-training learns (initial) model parameters by merging all \fsl{} tasks into one ``flat'' dataset of labeled samples followed by standard multi-class classification. The model parameters may be further fine-tuned to improve performance.

Despite its popularity, the limited theoretical understanding of pre-training leads to diverging interpretations of existing methods. Many meta-learning methods consider pre-training as nothing but a standard pre-processing step, and attribute the observed performance almost exclusively to their respective algorithmic and network design~\citep[e.g.][]{ye2020few, rusu2018meta, yang2021free}. However, extensive empirical evidence suggests that pre-training is crucial for model performance~\cite{wertheimer2021few, wang2021role}. Tian \textit{et al.}~\cite{tian2020rethinking} demonstrated that simply learning task-specific linear classifiers over the pre-trained representation outperforms various meta-learning strategies. Wertheimer \textit{et al.}~\cite{wertheimer2021few} further showed that earlier \fsl{} methods also benefit from pre-training, resulting in improved performance. 

In this work we contributes a unified perspective by showing that pre-training directly relates to meta-learning by minimizing an upper bound on the meta-learning loss. In particular, we show that pre-training achieves a smaller expected error and enjoys a better convergence rate compared to its meta-learning counterpart. More broadly, we connect pre-training to conditional meta-learning~\citep{denevi2020advantage, wang2020structured}, which has favorable theoretical properties including tighter bounds. Our result provide a principled justification of why pre-training yields a robust meta-representation for \fsl{}, and the associated performance improvement.

Motivated by this result, we propose an augmentation procedure for pre-training that quadruples the number of training classes by considering rotations as novel classes and classifying them jointly. This significantly increases the size of training data and leads to robust representations. We empirically demonstrate that the augmentation procedure consistently performs better across different benchmarks.

\begin{figure*}[t]
    \centering
    \subfloat[\textbf{Global vs. Local labels}]{
    \includegraphics[width=0.2\textwidth]{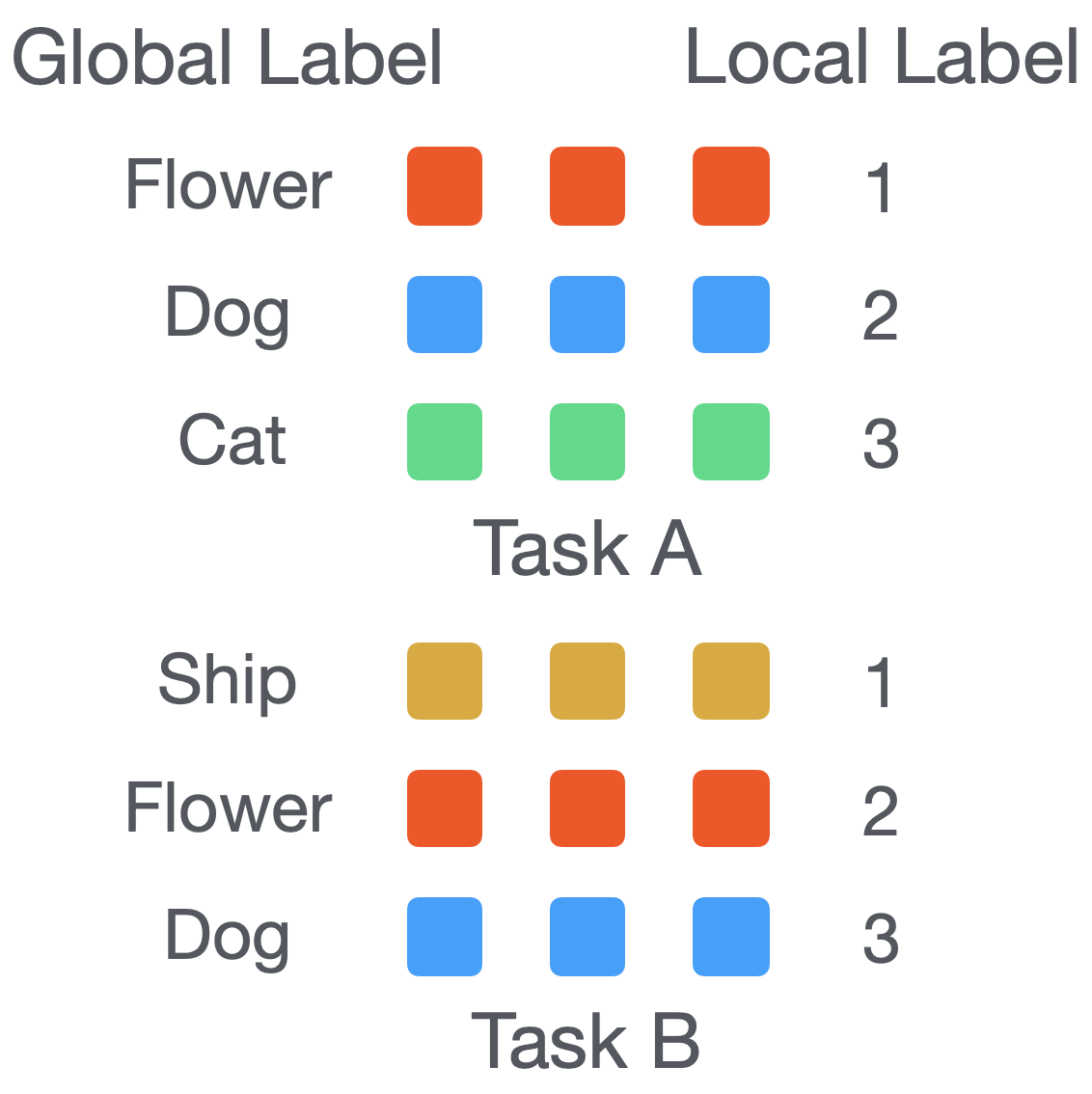}\label{fig:global_local_label}}
    \hfil
    \subfloat[\textbf{Global to Local Classifier}]{
    \includegraphics[width=0.6\textwidth]{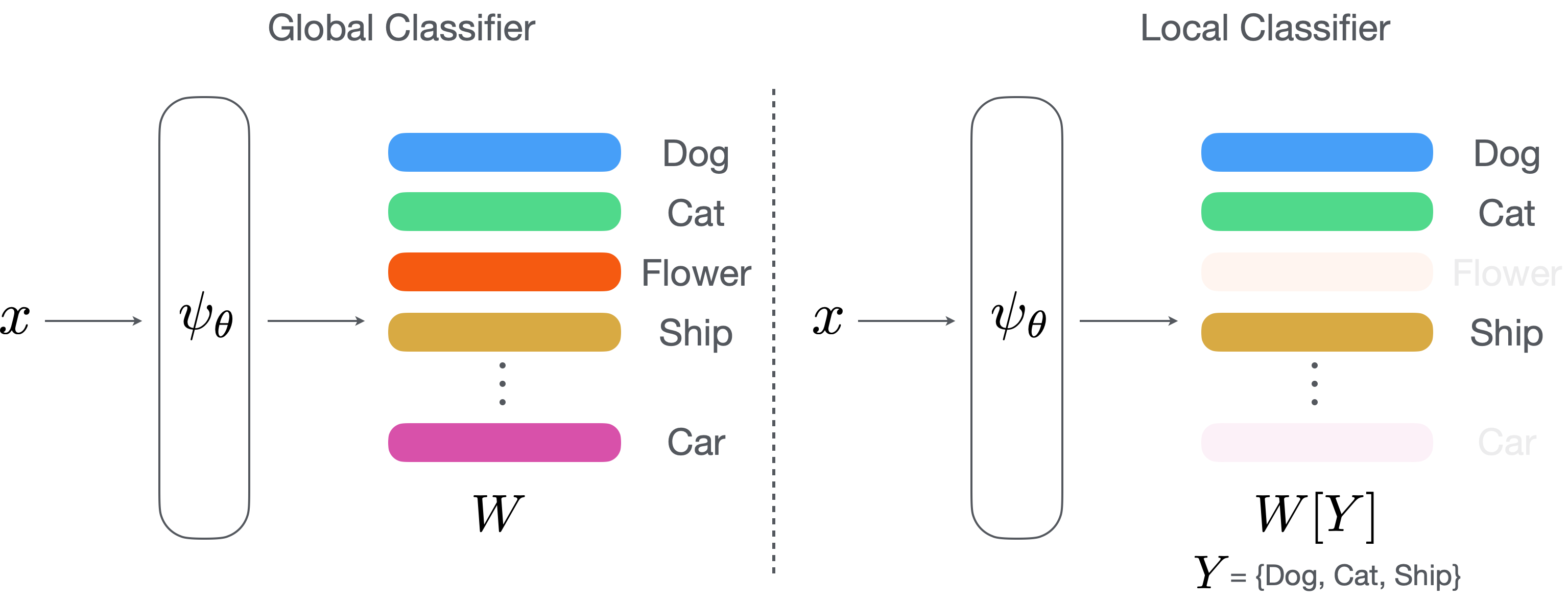}\label{fig:global_local_model}}
    \caption{{\bfseries (a)} Colored squares represent samples. Tasks A and B can be ``merged'' meaningfully using global labels, but not local ones. {\bfseries (b)} A global classifier can be used as local classifiers given the indices $Y$ of the intended classes to predict.}
\end{figure*}

The standard \fsl{} setting~\citep[e.g.][]{finn2017model, nichol2018first, bertinetto2018meta} assumes access to a collection of tasks (i.e. the meta-training set) for training data. To perform pre-training, meta-training tasks must be merged into a flat dataset (see \cref{sec:bg_pre-train} for a formal definition), which implicitly assumes access to some notion of {\it global labels} shared across all tasks. However, global labels may be unavailable, such as when each task is independently labeled with only \textit{local labels}. This renders naive task merging and pre-training infeasible (see \cref{fig:global_local_label}). Independent task annotation is a more realistic and general assumption, capturing scenarios when training tasks are collected \textit{organically} from different sources rather than generated {\itshape synthetically} from a base dataset. Practical scenarios where naive task merging is infeasible include non-descriptive task labels (e.g. numerical ones) or concept overlap (e.g. marine animals vs mammals) among labels.

To tackle independent task annotation, we propose {\bfseries Me}ta {\bfseries La}bel Learning (\laml{}), a novel algorithm that automatically infers a notion of \textit{latent} global labels consistent with local task constraints. The inferred labels enable us to exploit pre-training for \fsl{}, and to bridge the gap between experimental settings with or without access to global labels. Empirically, we demonstrate that \laml{} is competitive with pre-training on oracle labels.

For experiments, we introduce a new Generalized \fsl{} (\gfsl{}) setting. In addition to independent task annotation, we also adopt a fixed-size meta-training set and enforce no repetition of samples across tasks. This challenging setting evaluates how efficiently meta-learning algorithms generalize from limited number of tasks, and prevents the algorithms from trivially uncover task relations by implicitly matching identical samples across tasks. We empirically show that \laml{} performs robustly in both standard and \gfsl{} settings, and clearly outperforms state-of-the-art models in the latter.\\

\noindent We summarize the main contributions below:
\begin{itemize}
    \item We prove that pre-training relates to meta-learning as a loss upper bound. Consequently, minimizing the pre-training loss is a viable proxy for tackling meta-learning problems. Additionally, we identify meta-learning regimes where pre-training offers a clear improvement with respect to sample complexity. This theoretical analysis provides a principled explanation for pre-training's empirical advantage. 
    \item We propose \laml{}, a general algorithm for inferring latent global labels from meta-training tasks. It allows us to exploit pre-training when global labels are absent or ill-defined.
    \item We propose an augmented pre-training procedure for \fsl{} and a \gfsl{} experimental setting.
    \item Extensive experiments demonstrate the robustness of \laml{}. Detailed ablations provide deeper understanding of the model.
\end{itemize}

\paragraph{Extension of \cite{wang2021role}} This paper is an extended version of \cite{wang2021role} with the following contributions in addition to those of the original work: $i)$ a deeper theoretical insight into the role of pre-training from the perspective of the risk (rather than the empirical risk as in \cite{wang2021role}), and quantifying its benefit in terms of sample complexity, $ii)$ the augmented training procedure for \fsl{}, $iii)$ the \gfsl{} experimental setting, $iv)$ significantly more empirical evidence to support the proposed algorithm. 

\section{Background}
\label{sec:bg}
We formalize \fsl{} as a meta-learning problem and review related methods. We also discuss the pre-training procedure adopted by many \fsl{} methods.

\subsection{Few-shot Learning using Meta-learning}
\fsl{}~\citep{fei2006one} considers a meta-training set of tasks $\Tau=\{(\dtr_t, \dval_t)\}_{t=1}^{T}$, with {\em support set} $\dtr_t=\{(x_{j}, y_{j})\}_{j=1}^{\supportn}$ and {\em query set} $\dval_t=\{(x_j, y_j)\}_{j=1}^{\queryn}$ sampled from the same distribution. Typically, $\dtr_t$ and $\dval_t$ contain a small number of samples $\supportn$ and $\queryn$ respectively. We denote by $\D$ the space of datasets of the form $S_t$ or $Q_t$.

The meta-learning formulation for \fsl{} aims to find the best {\itshape base learner} $\alg(\metapar,\cdot):\D\to\mathcal{F}$ that takes as input support sets $\dtr$, and outputs predictors $f = \alg(\metapar,\dtr)$, such that predictions $y=f(x)$ generalize well on the corresponding query sets $\dval$. The base learner is meta-parameterized by $\metapar\in\Metapar$. Formally, the meta-learning objective for \fsl{} is
\eqal{\label{eq:meta-learning-risk}
\EE_{(\dtr, \dval)\in\Tau} ~\Lagr\big(\alg(\metapar,\dtr),~\dval\big),
}
where $\EE_{(\dtr, \dval)\in\Tau}\triangleq \frac{1}{|\Tau|}\sum_{(S,Q)\in\Tau}$ is the empirical average over the meta-training set $\Tau$. The {\itshape task loss} $\Lagr:\F\times\D\to\R$ is the empirical risk of the learner $f$ over query sets, based on some {\itshape inner loss} $\ell: \Y\times\Y\to\R$, where $\Y$ is the space of labels,
\eqal{\label{eq:meta-loss}
    \Lagr(f,D) ~=~ \EE_{(x, y)\in D}~[ \ell(f(x),y)].
}
\Cref{eq:meta-learning-risk} is sufficiently general to describe most existing methods. For instance, model-agnostic meta-learning (MAML)~\cite{finn2017model} parameterizes a model $f_\metapar:\X\to\Y$ as a neural network, and $\alg(\metapar, D)$ performs one (or more) steps of gradient descent minimizing the empirical risk of $f_\metapar$ on $D$. Formally, given a step-size $\eta>0$, 
\eqal{\label{eq:maml}
f_{\metapar'} = \alg(\metapar,D) \quad ~\textrm{with}~ \quad \metapar' = \metapar - \eta~ \nabla_\metapar \Lagr(f_\metapar,D).
} 
Clearly, base learners $\alg(\metapar,\cdot)$ is key to model performance and various strategies have been explored. Our proposed method is most closely related to meta-representation learning~\citep{raghu2019rapid, lee2019meta, bertinetto2018meta,franceschi2018bilevel}, which parameterizes the base learner as $A(\theta, D) = \ridge(\embd(D)) \embd(\cdot)$, consisting of a global feature extractor $\embd:\X\to\R^m$ and a task-adaptive classifier $\ridge:\D\to\{f: \R^{m} \to \Y\}$ that optimizes the following
\eqal{\label{eq:meta-representation-model}
   \min_{\metapar\in\Metapar} ~~ \EE_{(\dtr, \dval)\in\Tau} \left[\mathcal{L}\big(\ridge(\embd(\dtr)), \embd(\dval)\big)\right]
}
where $\embd(D)\triangleq \{(\embd(x), y) ~|~ (x, y) \in D\}$ is the embedded dataset. \Cref{eq:meta-representation-model} specializes \cref{eq:meta-learning-risk} by learning a feature extractor $\embd$ shared (and fixed) among tasks. Only the classifier returned by $\ridge(\cdot)$ adapts to the current task, in contrast to having the entire model $f_\metapar:\X\to\Y$ adapted (e.g. \cref{eq:maml} for MAML). While this may appear to restrict model adaptability, \cite{raghu2019rapid} has demonstrated that meta-representation learning matches MAML's performance. Moreover, they showed that feature reuse is the dominant contributor to the generalization performance rather than adapting the representation to the task at hand.

The task-adaptive classifier $\ridge(\cdot)$ may take various forms, including nearest neighbor~\cite{snell2017prototypical}, ridge regression classifier~\cite{bertinetto2018meta}, embedding adaptation with transformer models~\cite{ye2020few}, and Wasserstein distance metric~\cite{zhang2020deepemd}. In particular, the ridge regression estimator
\eqal{\label{eq:closed-form-solver}
    \ridge_{\rm ridge}(D) {=} \argmin_{W}~\EE_{(x,y)\in D}~\nor{Wx - y}^2 {+} \lambda_1\nor{W}_{F}^2,
}
where \(\nor{\cdot}_{F}\) is the Frobenius norm, admits a differentiable closed-form solution and is computationally efficient for optimizing \cref{eq:meta-representation-model}.

\subsection{Conditional Meta-Learning}
Conditional formulations of meta-learning~\cite{denevi2020advantage,wang2020structured} extends \Cref{eq:meta-learning-risk} by considering base learners of the form $\alg(\tau(Z),\dtr)$, where the meta-parameters $\metapar = \tau(Z)$ is conditioned on some ``contextual'' information $Z \in \mathcal{Z}$ about the task $\dtr$. Assuming each task in the meta-training set $\Tau$ to be equipped with such contextual information, \cref{eq:meta-learning-risk} can be re-expressed as
\eqal{\label{eq:conditional-meta-learning-risk}
\min_{\tau:\mathcal{Z}\to\Metapar}~~ \EE_{(\dtr, \dval, Z)\in\Tau} ~\Lagr\big(\alg(\tau(Z),\dtr),~\dval\big),
}
namely the problem of learning a function $\tau:\mathcal{Z}\to\Metapar$, which maps contextual information $Z\in\mathcal{Z}$ (e.g. a textual meta-description of the task/dataset) to a good task-specific base learner with parameters $\metapar = \tau(Z)$.

The conditional formulation seeks to capture complex (e.g. multi-modal) distributions of meta-training tasks, and uses a unique base learner tailored to each one. In particular, \cite{vuorio2019multimodal, yao2019hierarchically, rusu2018meta} directly learn data-driven mappings from target tasks to meta-parameters, and \cite{jiang2018learning} conditionally transforms feature representations based on a metric space trained to capture inter-class dependencies. Alternatively, \cite{jerfel2019reconciling} considers a mixture of hierarchical Bayesian models over the parameters of meta-learning models in order to condition on target tasks. In \cite{wang2020structured}, Wang \textit{et al.} showed that conditional meta-learning can be interpreted as a structured prediction problem and proposed a method leveraging structured prediction. From a more theoretical perspective, Denevi \textit{et al.} \cite{denevi2020advantage,denevi2022} proved that conditional meta-learning is theoretically advantageous compared to unconditional approaches by incurring smaller excess risk and being less prone to negative transfer. As we will discuss in \cref{sec:thm}, conditional meta-learning is closely related to our theoretical analysis on feature pre-training.

\subsection{Feature Pre-training}
\label{sec:bg_pre-train}
Feature pre-training has been widely adopted in the recent meta-learning literature~\citep[e.g.][]{mangla2020charting, oreshkin2018tadam, chen2018closer, wang2020structured, wertheimer2021few, yang2021free, ye2020few, bateni2020improved, requeima2019fast, sun2019meta, zhang2020deepemd, afrasiyabi2022matching}, and is arguably one of the key contributors to performance of state-of-the-art models. Instead of directly learning the feature extractor $\embd$ by optimizing \cref{eq:meta-representation-model}, pre-training first learns a feature extractor via standard supervised learning.

Formally, the meta-training set $\Tau$ is ``flattened'' into $\dm$ by merging all tasks:
\eqal{\label{eq:merge_set}
    \dm = D(\Tau) =\{(x_i, y_i)\}_{i=1}^N=\bigcup\limits_{(\dtr, \dval) \in \Tau} (\dtr\cup\dval),
}
where we have re-indexed the $(x_i,y_i)$ samples from $i=1$ to $N$ (the cumulative number of points from all support and query sets) to keep the notation uncluttered. Pre-training then learns the embedding function $\embd$ on $\dm$ using the standard cross-entropy loss $\ell_{\rm ce}$ for multi-class classification:
\eqal{
\label{eq:std_classify}
    (W_N^{\rm pre},\metapar_N^{\rm pre}) = \argmin_{\metapar, W}~\Lagr(W\embd,\dm).
}
where $W$ is the linear classifier over all classes. After pre-training, the feature extractor is either fixed~\citep[e.g.][]{rusu2018meta, ye2020few, wang2020structured, zhang2020deepemd, tian2020rethinking} or further adapted~\citep[e.g][]{bateni2020improved, bateni2022beyond, goldblum2020unraveling, requeima2019fast} via meta-learning.

There is limited theoretical understanding and consensus on the effect of pre-training in \fsl{}. In \cite{rusu2018meta, bateni2020improved, yang2021free}, the pre-training is only considered a standard pre-processing step for encoding the raw input and model performance is predominantly attributed to the proposed meta-learning algorithms. In \cite{goldblum2020unraveling} the authors similarly argued that meta-trained features are better than pre-trained ones, observing that adapting the pre-trained features with several base learners resulted in worse performance compared to the meta-learned features. In contrast, however, several works also empirically demonstrated that pre-training contributes significantly towards performance. \cite{tian2020rethinking} showed that combining the pre-trained features with suitable base learners already outperforms various meta-learning methods, while \cite{el2022lessons} observed that pre-training dominates top entries in 2021 Meta-learning Challenge.

We conclude by noting that recently pre-training has also been successfully applied to large-scale multi-modal settings combining visual and language input, enabling zero-shot learning~\cite{radford2021learning}, more flexible few-shot learning~\cite{alayrac2022flamingo} (i.e., tasks may be described using free text), and generating more samples to augment \fsl{}~\cite{xu2022generating}. While this line of work further showcases the efficacy of pre-training strategies for \fsl{}, in this work we focus on few shot learning settings with a single input modality.

\section{Pre-training as Meta-learning}
\label{sec:thm}
In this section, we characterize how feature pre-training relates to meta-learning as a loss upper bound. More precisely, we show that pre-training induces a special base learner with its corresponding meta-learning loss upper bounded by the cross-entropy loss $\lce$. Consequently, pre-training already produces a meta-representation suitable for \fsl{}, matching the empirical results from \cite{tian2020rethinking, ye2020few}. In addition, we show that pre-training incurs a smaller risk compared to its meta-learning counterpart, and more generally induces a conditional formulation that exploits contextual information for more robust learning.

\subsection{Notation and Problem Setting}
We consider a few-shot classification setting with a total of $C$ classes (global labels). Denote by $\mu$ the meta-distribution sampling distributions (a.k.a. tasks) $\rho$, from which we sample support and query sets $(S,Q)$ for each task. Each task distribution $\rho$ is associated with $k\leq C$ class labels $y_\rho^{(1)},\dots,y_\rho^{(k)}\in\{1,\dots,C\}$. Denote by $\rho_Y = \{y_\rho^{(1)},\dots,y_\rho^{(k)}\}$ the corresponding subset of $\{1,\dots,C\}$. Given a matrix $W\in\R^{C\times m}$ and a vector $Y\in\{1,\dots,C\}^k$ of indices, we denote by $W[Y] = W[\rho_Y]\in\R^{k\times m}$ the submatrix of $W$ obtained by selecting the rows corresponding to the unique indices $\rho_Y$ in $Y$. Lastly, Given a dataset $D = (x_i,y_i)_{i=1}^n$ we denote by $D_Y\in\{1,\dots,C\}^n$ the vector with entries corresponding to the labels $y_i$.

We also define the expected error incurred by a meta-learning algorithm solving \cref{eq:meta-representation-model}
\eqal{\label{eq:gls-risk}
    \E(\theta,\mu) = \EE_{\rho\sim\mu}~\EE_{(S,Q)\sim\rho}~\Lagr(w(g_\theta(S)),g_\theta(Q)).
}
This is the meta-learning risk incurred by a meta-parameter $\theta$, namely the error incurred by training the classifier via $w(g_{\theta}(S))$ (e.g. \Cref{eq:closed-form-solver}) and testing it on the query set $g_{\theta}(Q)$, averaged over $(S, Q)$ pairs sampled from tasks $\rho$, which in turn are sampled from meta-distribution $\mu$. The risk is the ideal error we wish to minimize.

\subsection{Global Label Selection (GLS)}\label{sec:gls}
We start our analysis by introducing a special \fsl{} scenario that will be useful for understanding the relation between pre-training and meta-learning. To this end, we assume in this scenario that global labels are available to the model.

Given the access to global labels, we can design a new algorithm that learns a single global multi-class linear classifier $W$ at the meta-level (i.e. shared across all tasks), and simply selects the required rows $W[S_Y]$ when tackling a task. More formally, we can define a special base learner called {\itshape global label selector (GLS)} such that
\eqals{
    \alg((W,\metapar),S) = \textrm{GLS}(W,\metapar,S) = W[S_Y]\embd(\cdot).
}
Illustrated in \cref{fig:global_local_model}, this ``algorithm'' does not solve an optimization problem on the support set $S$, but only selects the subset of rows of $W$ corresponding to the classes present in $S$ as the task-specific classifier.

Since $W$ and $\theta$ are now both shared across all tasks, we may learn them jointly by minimizing the following 
\eqal{\label{eq:erm-gls}
    \EE_{(S,Q)\in\Tau}~\Lagr(W[S_Y]\embd(\cdot), Q),
}
over both $W$ and $\theta$. This strategy, to which we refer as {\itshape meta-GLS}, learns both the representation and linear classifier at the meta-level, with the sole task-specific adaptation process being the selection of columns of $W$ using the global labels.

\vspace{1em}\paragraph{GLS Finds a Good Meta-representation}
Learning a global $W$ shared among multiple tasks (rather than having each classifier $w(\embd(S))$ accessing exclusively the tasks' training data), can be very advantageous for generalization. This is evident when  the (global) classes are separable for a meta-representation $g_{\metapar}$. Let
\eqal{\label{eq:expected-gls}
    \E_\gls(W,\metapar,\mu) = \EE_{\rho\sim\mu}~\EE_{(S,Q)\sim\rho}~\Lagr(W[S_Y]\embd(\cdot), Q),
}
denote the expected meta-GLS risk incurred by the minimizer of \cref{eq:erm-gls}. Then, for any inner algorithm $w(\cdot)$, we have
\eqal{\label{eq:gls-and-metarep}
    \min_{W} ~\E_\gls(W,\metapar,\mu) \leq \E(\metapar,\mu),
}
namely that, for any given representation $g_\theta$, finding a global classifier $W$ for all classes is more favorable than solving each task in isolation. In other words, {\itshape
solving meta-GLS provides a good representation $\embd(\cdot)$ for standard meta-learning problem. 
} 
\subsection{Pre-training and GLS}
Existing works such as \cite{tian2020rethinking, el2022lessons} demonstrates that pre-training offers a robust alternative to learn the meta-representation $\embd(\cdot)$. We will show that GLS is related to pre-training, under some mild assumptions.

\begin{assumption}\label{asm:task-distribution}
The meta-distribution $\mu$ samples tasks $\rho$. Sampling from each $\rho$ is performed as follows:
\begin{enumerate}
    \item For each $j\,{\in}\,\{1,\dots,k\}$ and class $y_\rho^{(j)} \,{\in}\, \rho_Y$, 
    we sample $n$ examples $x_{1}^{(j)},\dots,x_n^{(j)}$ i.i.d. from a conditional distribution $\pi(x|y\,{=}\,y_\rho^{(j)})$ shared across all tasks. All generated pairs are collected in the support set $S = (x_i^{(j)},y_\rho^{(j)})_{i,j=1}^{n,k}$.
    \item The query set $Q$ is generated by sampling  $m$ points i.i.d. from $\pi_\rho(x,y)$, namely $Q\sim\pi_\rho^m$ with
    \eqal{\label{eq:sampling-query-set}
        \pi_\rho(x,y) = \pi(x|y)\mathrm{Unif}_{\rho}(y)
    }
    and\,$\mathrm{Unif}_{\rho}$\,the uniform distribution over the labels in $\rho_Y$.
\end{enumerate}
\end{assumption}
\noindent In essence, \cref{asm:task-distribution} characterizes the standard process of constructing meta-training tasks for \fsl{}, typically adopted to build pre-training datasets in practice. In particular, let $\pi_\mu(x,y)$ be the marginal probability of observing $(x, y)$ in the meta-training tasks, i.e. firstly sampling a task $\rho$ from $\mu$, followed by sampling a class $y$ uniformly by $\mathrm{Unif}_\rho(\cdot)$ and finally $x$ by $\pi(\cdot|y)$. It then follows that sampling a dataset $\dm$ from $\pi_\mu$ is equivalent to sample a meta-training set $\Tau$ from $\mu$ and flatten it into $D(\Tau)$ according to the pre-training procedure described in \cref{eq:merge_set}.

We can therefore introduce the {\itshape (global) multi-class classification} risk associated to $\pi_\mu$
\eqal{\label{eq:risk-multiclass}
\Lagr(W\embd(\cdot),\pi_\mu) = \EE_{(x,y)\sim\pi_\mu}~\ell_{\rm ce}(W\embd(x),y).
}
The above risk can be seen as the ideal objective for the pre-training estimator in \cref{eq:std_classify}. In addition, the following result relates pre-training to meta-GLS.

\begin{restatable}{theorem}{glsandpretraining}
\label{thm:gls-and-pretraining}
Under \cref{asm:task-distribution}, let $\pi_\mu(x,y)$ be the marginal distribution of observing $(x, y)$ in the meta-training set. Then, for any (global) classifier $W$, 
\eqal{\label{eq:gls-and-pretraining}
    \E_\gls(W,\metapar,\mu) \leq \Lagr(W,\theta,\pi_\mu).
}
Moreover, if the global classes are separable, 
\eqal{\label{eq:gls-and-pretraining-min}
    \min_{W,\metapar}~\E_\gls(W,\metapar,\mu) = \min_{W,\metapar}~ \Lagr(W,\theta,\pi_\mu).}
\end{restatable}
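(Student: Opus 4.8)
The plan is to first establish \cref{eq:gls-and-pretraining} by a pointwise comparison of cross-entropy losses, and then obtain \cref{eq:gls-and-pretraining-min} by noting that separability forces both minima to vanish. The elementary fact driving everything is that restricting a softmax to a subset of classes that still contains the true label can only decrease the cross-entropy: for any logits $v\in\R^{C}$, any label $y$, and any $\rho_Y\subseteq\{1,\dots,C\}$ with $y\in\rho_Y$, writing $\ell_{\rm ce}(v[\rho_Y],y)$ for the cross-entropy over the classes $\rho_Y$ evaluated at $y$, we have $\ell_{\rm ce}(v[\rho_Y],y)\leq\ell_{\rm ce}(v,y)$: the numerator $e^{v_y}$ is unchanged while the normalizer $\sum_{c\in\rho_Y}e^{v_c}$ is no larger than $\sum_{c=1}^{C}e^{v_c}$, so the predicted probability of $y$ only grows and its negative logarithm only shrinks.

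With this in hand, I would put $\E_\gls(W,\metapar,\mu)$ into ``flattened'' form. Under \cref{asm:task-distribution} the support set $S$ contains $n\ge 1$ examples of every class in $\rho_Y$, hence the distinct indices of $S_Y$ are exactly $\rho_Y$ and $W[S_Y]=W[\rho_Y]$ depends only on $\rho$; and $Q$ consists of $m$ i.i.d. draws from $\pi_\rho$, so $\EE_{(S,Q)\sim\rho}\,\Lagr(W[S_Y]\embd(\cdot),Q)=\EE_{(x,y)\sim\pi_\rho}\,\ell_{\rm ce}\big((W\embd(x))[\rho_Y],\,y\big)$. Averaging over $\rho\sim\mu$ and invoking the pointwise bound above — valid since $(x,y)\sim\pi_\rho$ forces $y\in\rho_Y$ — yields $\E_\gls(W,\metapar,\mu)\leq\EE_{\rho\sim\mu}\EE_{(x,y)\sim\pi_\rho}\,\ell_{\rm ce}(W\embd(x),y)$. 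Finally, by construction $\pi_\mu$ is the mixture $\EE_{\rho\sim\mu}\pi_\rho$ (sampling $\rho$, then $y\sim\mathrm{Unif}_\rho$, then $x\sim\pi(\cdot\mid y)$ is exactly drawing from $\pi_\mu$), so the right-hand side equals $\Lagr(W,\metapar,\pi_\mu)$, which is \cref{eq:gls-and-pretraining}.

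For \cref{eq:gls-and-pretraining-min}, minimizing both sides of \cref{eq:gls-and-pretraining} over $(W,\metapar)$ already gives $\min_{W,\metapar}\E_\gls(W,\metapar,\mu)\leq\min_{W,\metapar}\Lagr(W,\metapar,\pi_\mu)$. For the reverse inequality I would use that both objectives are cross-entropy risks and therefore nonnegative, so both minima are $\geq 0$; and that separability of the $C$ global classes supplies a representation $\metapar$ and a linear separator $W_0$ for which the rescaled classifier $W=tW_0$ sends $\Lagr(tW_0,\metapar,\pi_\mu)\to 0$ as $t\to\infty$, so $\min_{W,\metapar}\Lagr(W,\metapar,\pi_\mu)=0$. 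Then $0\leq\min_{W,\metapar}\E_\gls(W,\metapar,\mu)\leq 0$, forcing both minima to equal $0$, hence to coincide.

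I expect the only delicate point to be the separability step, not the inequality: to make the rescaling argument actually drive the population cross-entropy to $0$ one needs the linear separation of the global classes to hold with a margin bounded away from zero over (almost all of) the support of $\pi_\mu$, together with $\embd$ having bounded range — precisely what the ``separable'' hypothesis should encode, and which I would invoke as such. Everything else is routine bookkeeping with expectations and the monotonicity of the logarithm.
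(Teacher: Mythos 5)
Your proposal matches the paper's proof essentially step for step: the same pointwise observation that restricting the softmax normalizer to $\rho_Y\ni y$ can only decrease the cross-entropy, the same reduction of $\E_\gls$ to an expectation over $\pi_\rho$ (using that $W[S_Y]=W[\rho_Y]$ under \cref{asm:task-distribution}) followed by marginalizing $\rho$ to get $\pi_\mu$, and the same argument for the equality of minima via nonnegativity of $\lce$ plus separability forcing $\min_{W,\metapar}\Lagr(W,\metapar,\pi_\mu)=0$. Your extra remark on needing a margin and bounded features to drive the population cross-entropy to zero is a fair gloss on what the paper's separability hypothesis (its Eq.~(26)) silently assumes, but it does not change the argument.
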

\noindent The result shows that the GLS error is upper bounded by the global multi-class classification error. Hence, minimizing the global multi-class classification error also indirectly minimizes the meta-learning risk. This implies that {\itshape pre-training implicitly learns a meta-representation suitable for \fsl{}}.

\subsection{Generalization Properties}\label{sec:generalization}
\cref{thm:gls-and-pretraining} shows that under the class-separability assumption, pre-training is equivalent to performing meta-GLS. We now study which of the two approaches is more sample-efficient from a generalization perspective.

Let $(W_T^\gls,\metapar_T^\gls)$ denote the meta-parameters learned by an algorithm minimizing \cref{eq:erm-gls} over a dataset $\Tau$ comprising of $T$ separate tasks. Applying standard results from statistical learning theory, we can obtain excess risk bounds characterizing the quality of $\theta_T$'s predictions in terms of the number $T$ of tasks the algorithm has observed in training. For instance, following \citep[Chapter 26]{shalev2014understanding} we have that in expectation with respect to sampling $\Tau$
\eqals{
    \EE_{\mathcal{T}}[\E_{\textrm{\tiny GLS}}(W_T^{\textrm{\tiny GLS}},\metapar_T^{\textrm{\tiny GLS}},\mu)] & \leq \min_{(W,\metapar)\in\Omega}
~\E_{\textrm{\tiny GLS}}(W,\metapar,\mu) + 2L_\gls\mathfrak{R}_T(\Omega)\\
 & \leq \min_{(W,\metapar)\in\Omega}
~\E_{\textrm{\tiny GLS}}(W,\metapar,\mu) + \frac{2L_\gls C_{\Omega}}{\sqrt{T}}
}
where $L_\gls$ denotes the Lipschitz constant of $\E_\gls$, while $\Omega\subset\R^{m \times C}\times\Theta$ is the space of hypotheses for the multi-class classifier $Wg_\theta(\cdot)$. Here, $\mathfrak{R}_T(\Omega)$ is the Rademacher complexity of $\Omega$ \cite{shalev2014understanding}, which measures the overall potential expressivity of an estimator that can be trained over them. For neural networks, \cite{golowich2018size} showed that $\mathfrak{R}_T(\Omega)$ may be further bounded by $\mathfrak{R}_T(\Omega) \leq C_{\Omega}/\sqrt{T}$, where $C_\Omega$ is a constant depending on the specific neural architecture, with deeper networks having a larger constant. The bound indicates that the risk incurred by GLS becomes closer to that of the ideal meta-parameters as the number of observed tasks $T$ grows.

We can apply the same Rademacher-based bounds to \cref{eq:risk-multiclass} and the pre-training estimator from \cref{eq:std_classify}, obtaining that in expectation with respect to sampling $\Tau$
\eqals{
    \EE_{\mathcal{T}}[\Lagr(W_{N}^{\rm pre},\metapar_{N}^{\rm pre},\pi_\mu)] \leq \min_{(W,\metapar)\in\Omega} ~\Lagr(W,\metapar,\pi_\mu) + \frac{2L_{\textrm{\rm pre}}C_{\Omega}}{\sqrt{N}}
}
where $N$ is the number of samples in $\dm$ and $L_{\textrm{\rm pre}}$ is the Lipschitz constant of the global multi-class classification risk. By combining the above bound with the result from \cref{thm:gls-and-pretraining} we conclude that
\eqals{
\EE_{\mathcal{T}}[~\E_{\textrm{\tiny GLS}}(W_{N}^{\rm pre},\metapar_{N}^{\rm pre},\mu)] \leq \min_{W,\metapar}
~\E_{\textrm{\tiny GLS}}(W,\metapar,\mu) + \frac{2L_{\textrm{\rm pre}}C_{\Omega}}{\sqrt{N}},
}
which is an excess risk bound analogous to that obtained for meta-GLS. The key difference is that the bound above depends on the number $N$ of total samples in $D_{\textrm{global}}$, rather that the total number $T$ of tasks. 

Comparing the rates of meta-GLS and the pre-training estimator, we observe that typically $N \gg T$ (for instance $N=nT$ when each task has the same number of $n$ samples). Additionally, since $L_{\mathrm{pre}}$ is comparable or smaller than $L_\gls$ (see Appendix \ref{sec:app-lipschitz}), we conclude that 
\begin{center}
{\itshape Given exactly the same data ($\Tau$ for meta-GLS and $D(\Tau)$ for pre-training), pre-training achieves a much smaller error than meta-GLS}.
\end{center}
For instance, in the case of a 5-way-5-shot \fsl{} problem, pre-training improves upon the meta-GLS bound on excess risk by a factor of $\sqrt{N/T} = \sqrt{n} = \sqrt{100} = 10$. 

Given the relation between GLS and standard meta-learning that we highlighted in \cref{sec:gls}, our analysis provides a strong theoretical argument in favor of adopting pre-training in meta-learning settings. To our knowledge, this is a novel and surprising result.

\subsection{Connection to Conditional Meta-Learning} More generally, we observe that GLS is also an instance of conditional meta-learning: the global labels of the task provide additional contextual information about the task to facilitate model learning. Global labels directly reveal how tasks relate to one another and in particular if any classes to be learned are shared across tasks. GLS thus simply map global labels of tasks to task classifiers via $W[S_Y]$. In contrast, unconditional approaches (e.g. R2D2~\cite{bertinetto2018meta}, ProtoNet~\cite{nichol2018first}) learn classifiers by minimizing some loss over support sets, losing out on the access to the contextual information provided by global labels.

In addition to our result, \cite{denevi2020advantage,denevi2022} also proved that conditional meta-learning is advantageous over the unconditional formulation by incurring a smaller excess risk, especially when the meta-distribution of tasks is organized into distant clusters. We refer readers to the original papers for a detailed discussion. In practice, global labels provide clustering of task samples for free and improve regularization by enforcing each cluster (denoted by global label $y_{\rho}^j$) to share classifier parameters $W[y_{\rho}^j]$ across all tasks. This provides further explanation to why pre-training yields a robust meta-representation with strong generalization performance.

\subsection{Leveraging Pre-training in Practice}\label{sec:practical-considerations}

The goal of meta-learning is to generalize to novel classes unseen during training. Therefore, practical \fsl{} applications assume meta-testing and meta-training distributions to share no class labels. To apply our analysis in \cref{sec:generalization} to these settings, we may follow the theoretical approach in \cite{du2020few} and assume that meta-training and meta-testing classes share a common representation. The assumption is reasonable since extensive empirical evidences demonstrate that pre-trained representation on meta-training set is robust for directly classifying novel classes~\citep{tian2020rethinking, el2022lessons}. To prevent overfitting on meta-training set and ensure a robust represntation for meta-testing, well-established techniques \citep[e.g.,][]{amos2017optnet, wang2020structured, tian2020rethinking} include imposing $\ell_2$ regularization during pre-training (see weight decay in Appendix \ref{sec:app_model}) and early stopping by performing meta validation.

While pre-training might offer a powerful initial representation $\metapar$ -- as highlighted by our analysis in \cref{sec:generalization} -- it may be advisable to further improve $\metapar$. One general strategy is to fine-tune $\metapar$ by directly optimizing \cref{eq:meta-representation-model} using the desired classifier to tackle novel classes~\citep[e.g.][]{ye2020few, rodriguez2020embedding,zhang2020deepemd,triantafillou2019meta}. This strategy is known as {\itshape meta fine-tuning}. A different approach is based on a transfer learning perspective. Specifically, \cite{kolesnikov2020big,shysheya2022fit,li2022cross} showed that careful task-specific fine-tuning (e.g., limiting the number of learnable parameters) from a pre-trained representation achieves robust generalization performance, even in \fsl{} settings. We investigate both strategies in our experiments.

\section{Methods}
In this section, we propose three practical algorithms motivated by our theoretical analysis. In \cref{sec:method_rot}, we introduce an augmentation procedure for pre-training to further improve representation learning in image-based tasks. In \cref{sec:method_mela}, we tackle the scenario where global labels are absent by automatically inferring a notion of global labels. Lastly, we introduce a meta fine-tuning procedure in \cref{sec:method_finetune} to investigate how much meta-learning could improve the pre-trained representation.

\subsection{Augmented Pre-training for Image-based Tasks}
\label{sec:method_rot}
In general, pre-training is a standard process with well-studied techniques for improving the final learned representation. Many of these techniques, including data augmentation~\cite{chen2018closer}, auxiliary losses~\cite{mangla2020charting} and model distillation~\cite{tian2020rethinking}, are also effective for \fsl{} (i.e. the learned representation is suitable for novel classes during meta-testing). In particular, we may interpret data augmentation techniques as increasing $N$ in the bounds for the pre-training estimator outlined in \cref{sec:generalization}, thus improving the error incurred by pre-training and consequently the learned representation $\embd$.

In addition to standard augmentations (e.g. random cropping and color jittering) investigated in \cite{chen2018closer}, we further propose an augmented procedure for pre-training via image rotation. For every class $y_i$ in the original dataset, we create three additional classes by rotating all images of class $y_i$ by $r\in\{90^\circ, 180^\circ, 270^\circ\}$ respectively. All rotations are multiples of $90^\circ$ such that they can be implemented by basic operations efficiently (e.g. flip and transpose) and prevent pre-training from learning any trivial features from visual artifacts produced by arbitrary rotations~\cite{gidaris2018unsupervised}. Pre-training is then performed normally on the augmented dataset. 

The augmented dataset quadruples the number of samples and classes compared to the original dataset. According to our analysis from \cref{sec:generalization}, pre-training on the augmented dataset may yield a more robust representation. Further, we also hypothesize that the quality of the representation also depends on the number of classes available in the pre-training dataset, since classifying more classes requires learning increasingly discriminating representations. Our experiments show that 1) augmented pre-training consistently outperforms the standard one, and 2) quality of the learned representation depends on both the dataset size and the number of classes available for training.

\subsection{Meta Label Learning}
\label{sec:method_mela}
The ability to exploit pre-training crucially depends on access to global labels. However, as discussed in \cref{sec:intro}, {\itshape global labels may be inaccessible in practical applications}. For instance when meta-training tasks are collected and annotated independently. Additionally, tasks may have conflicting labels over similar data points based on different task requirements -- a setting illustrated by our experiments in \cref{sec:exp_general}. Therefore in some applications, global labels are ill-defined, and pre-training is not directly applicable. 

To tackle this problem, we consider the more general setting where only local labels from each task are known. This setting is also the one originally adopted by most earlier works in  meta-learning~\citep[e,g][]{finn2017model, snell2017prototypical, vinyals2016matching, li2017meta, bertinetto2018meta, lee2019meta}. In the local label setting, we propose Meta Label Learning (\laml{}), which automatically infer a notion of latent global labels across tasks. The inferred labels enable pre-training and thus bridge the gap between the experiment settings with and without global labels. We stress that our proposed method {\itshape does not} replace standard pre-training with global labels, but rather provides a way to still benefit from such a strategy when they are absent.

\begin{algorithm}[t]
\caption{\laml{}}\label{alg:meta_learner}
\begin{algorithmic}
    \STATE \hspace{-1em}{\bfseries Input:} meta-training set $\Tau=\{\dtr_t, \dval_t\}_{t=1}^{T}$
    
    \STATE $\embd^{\rm sim} = \argmin_{\embd} \EE_{(\dtr, \dval)\in\Tau} \left[\mathcal{L}(\ridge(\embd(\dtr)), \embd(\dval)\right])$
    
    \STATE Global clusters $G = \textrm{LearnLabeler}(\Tau,\embd^{\textrm{sim}})$
    \STATE $\embd^{\rm pre} = \textrm{Pretrain}(D(\Tau), G)$
    
    \STATE $\embd^* = \textrm{MetaFinetune}(G, \Tau, \embd^{\rm pre})$
    
    \STATE {\bfseries Return} $\embd^*$
\end{algorithmic}
\end{algorithm}

\cref{alg:meta_learner} outlines the general approach for learning a few-shot model using \laml{}: we first meta-learn an initial representation $\embd^{\rm sim}$; Secondly, we cluster all task samples using $\embd^{\rm sim}$ as a feature map while enforcing local task constraints. The learned clusters are returned as inferred global labels. Using the inferred labels, we can apply pre-training to obtain $\embd^{\rm pre}$, which may be further fine-tuned to derive the final few-shot model $\embd^{*}$. We present in \cref{sec:method_finetune} a simple yet effective meta fine-tuning procedure.

For learning $\embd^{\rm sim}$, we directly optimize \cref{eq:meta-representation-model} using ridge regression \cref{eq:closed-form-solver} as the base learner. We use ridge regression for its computational efficiency and good performance. Using $\embd^{\rm sim}$ as a base for a similarity measure, the labeling algorithm takes as input the meta-training set and outputs a set of clusters as global labels. The algorithm consists of a clustering routine for sample assignment and centroid updates and a pruning routine for merging small clusters.\\

\paragraph{Clustering}
\label{para:clustering}
The clustering routine leverages local labels for assigning task samples to appropriate global clusters and enforcing task constraints. We observe that for any task, the local labels describe two constraints: 1) samples sharing a local label must be assigned to the same global cluster, while 2) samples with different local labels must not share the same global cluster. To meet constraint 1, we assign all samples $\{x^{(j)}_i\}_{i=1}^n$ of class $y_\rho^{(j)}$ to a single global cluster by
\eqal{
\label{eq:cluster_centroid}
    v^* = \argmin_{v \in \{1, \dots, V\}} \nor{\frac{1}{n}\sum_{i=1}^n\embd^{\rm sim}(x_i^{(j)}) - g_v}^2,
}
with $V$ being the current number of centroids.

We apply \cref{eq:cluster_centroid} to all classes $y_\rho^{(1)},\dots,y_\rho^{(k)}$ within a task. If multiple local classes map to the same global label, we simply discard the task to meet constraint 2. Otherwise, we proceed to update the centroid $g_{v^*}$ and sample count $N_{v^*}$ for the matched clusters using
\eqal{
\begin{split}
    \label{eq:cluster_update}
    g_{v^*} & \gets \frac{N_{v^*}g_{v^*}+ \sum_{i=1}^n\embd^{\rm sim}(x_i)}{N_{v^*}+n},\\
    N_{v^*} & \gets N_{v^*} + n,
\end{split}
}
\paragraph{Pruning} We also introduce a strategy for pruning small clusters. We model the sample count of each cluster as a binomial distribution $N_v \propto B(T, p)$. We set $p=\frac{1}{V}$, assuming that each cluster is equally likely to be matched by a local class of samples. Any cluster with a sample count below the following threshold is discarded,
\eqal{
\label{eq:cluster_prune}
    N_v<\Bar{N_v} - q\sqrt{\textrm{Var}(N_v)}
}
where $\Bar{N_v}$ is the expectation of $N_v$, $\textrm{Var}(N_v)$ the variance, and $q$ a hyper-parameter controlling how aggressive the pruning is.

\begin{algorithm}[t]
   \caption{LearnLabeler\label{alg:self-label}}
\begin{algorithmic}
    \STATE \hspace{-1em}{\bfseries Input:} embedding model $\embd^{\mathrm{sim}}$, meta-training set $\Tau=\{\dtr_t, \dval_t\}_{t=1}^{T}$, number of classes in a task $k$
    
    \STATE \hspace{-1em}{\bfseries Initialization:} sample tasks from $\Tau$ to initialize clusters $G=\{g_v\}_{v=1}^{V}$,
    \vspace{0.5em}
    
    \STATE {\bfseries While} $|G|$ has not converged:
    \STATE \quad $N_v = 1$ for each $g_v\in G$
    \STATE \quad \textbf{For} $(\dtr, \dval) \in \Tau$:
            \STATE \qquad Match $\dtr \cup \dval$ to its centroids $M \,{=}\, \{g_q\}_{q=1}^K$ using \cref{eq:cluster_centroid}
            \STATE \qquad \textbf{If} $M$ has $k$ unique clusters
                \STATE \quad\qquad Update centroid $g_q$ for each $g_q\in M$ via \cref{eq:cluster_update}    
    \STATE \quad $G \leftarrow \{g_v|g_v\in G, N_v\geq\textrm{threshold in }$\cref{eq:cluster_prune}$\}$
\STATE \hspace{-1em}{\bfseries Return} $G$
\end{algorithmic}
\end{algorithm}

\cref{alg:self-label} outlines the full labeling algorithm. We first initialize a large number of clusters by setting their centroids with mean class embeddings from random classes in $\Tau$. For $V$ initial clusters, $\lceil \frac{V}{k}\rceil$ tasks are needed since each task contains $k$ classes and could initialize as many clusters. The algorithm then alternates between clustering and pruning to refine the clusters and estimate the number of clusters jointly. The algorithm terminates and returns the current clusters $G$ when the number of clusters does not change from the previous iteration. Using clusters $G$, local classes from the meta-training set can be assigned global labels with nearest neighbor matching using \cref{eq:cluster_centroid}. For tasks that fail to map to $k$ unique global labels, we simply exclude them from the pre-training process.

The key difference between \cref{alg:self-label} and the classical $K$-means algorithm~\cite{lloyd1982least} is that the proposed clustering algorithm exploits local information to guide the clustering process, while $K$-means algorithm is fully unsupervised. We will show in the experiments that enforcing local constraints is necessary for learning robust meta-representation.

\cref{alg:self-label} also indirectly highlights how global labels, if available, offer valuable information about meta-training set. In addition to revealing precisely how input samples relate to one another across tasks, global labels provide an overview of meta-training set, including the desired number of clusters and their sizes. In contrast, \cref{alg:self-label} needs to estimate both properties when only local labels are given.

~\newline\noindent\paragraph{Time Complexity}
The time complexity of training \laml{} is dominated by the computational cost of pretraining, accounting for over $70\%$ of the overall running time. From our benchmarks, the time complexity of \laml{} is comparable to those of the current state-of-the-art methods based on pre-training~\citep[e.g.,][]{ye2020few, wertheimer2021few} and significantly more efficient than methods relying on complex base learners~\citep[e.g.,][]{zhang2020deepemd}. We refer to \cref{sec:app_time} for a more detailed discussion and comparison with \cite{ye2020few,wertheimer2021few, zhang2020deepemd}.

When global labels are not available \laml{} requires performing an additional inference step to estimate them. While this stage accounts for around 20\% of the total running time, we observe in \cref{sec:exp} that it provides a significant performance boost compared to \fsl{} methods not utilizing pre-training, which are the only applicable ones in the absence of global labels.

\subsection{Meta Fine-Tuning}
\label{sec:method_finetune}
As discussed in \cref{sec:practical-considerations}, while pre-training already yields a robust meta-representation for \fsl{}, it is desirable to adapt the pre-trained representation such that the new meta-representation better matches the base learner intended for novel classes. We call this additional training {\itshape meta fine-tuning}, which is adopted by several state-of-the-art \fsl{} models~\cite{zhang2020deepemd, ye2020few, wang2020structured, li2021universal}.

For meta fine-tuning, existing works suggest that model performance depends crucially on preserving the pre-trained representation. In particular, \cite{rusu2018meta, wang2020structured, li2021universal} all keep the pre-trained representation fixed, and only learn a relatively simple transformation on top for the new base learners. Additionally, \cite{goldblum2020unraveling} showed that meta fine-tuning the entire representation model lead to worse performance compared to standard meta-learning, negating the advantages of pre-training completely.

We thus present a simple residual architecture that preserves the pre-trained embeddings and allows adaptation for the new base learner. Formally, we consider the following parameterization for a fine-tuned meta-learned embedding $\embd^{*}$,
\eqal{
    \embd^{*}(x) = \embd^{\rm pre}(x) + h(\embd^{\rm pre}(x))
}
where $\embd^{\rm pre}$ is the pre-trained representation and $h$ a learnable function (e.g. a small fully connected network). We again use \cref{eq:closed-form-solver} as the base learner and optimizes \cref{eq:meta-representation-model} directly. Our experiments show that the proposed fine-tuning process achieves results competitive with more sophisticated base learners, indicating that the pre-trained representation is the predominant contributor to good test performance.

\section{Experiments}
\label{sec:exp}
We evaluate \laml{} on various benchmark datasets and compare it with existing methods. The experiments are designed to address the following questions:
\begin{itemize}
    \item How does \laml{} compare to existing methods for generalization performance? We also introduce the more challenging \gfsl{} setting in \cref{sec:exp_setting}.
    \item How do different model components (e.g. pre-training, meta fine-tuning) contribute to generalization performance?
    \item Does \laml{} learn meaningful clusters? Can \laml{} handle conflicting task labels?
    \item How can we improve the quality of the pre-trained representation?
    \item How robust is \laml{} to hyper-parameter choices?
\end{itemize}

\subsection{Benchmark Datasets}
\label{sec:exp_data}
\textbf{Mini/tiered-ImageNet.}~\cite{vinyals2016matching, ren2018meta} has become default benchmark for \fsl{}. Both datasets are subsets of ImageNet~\cite{russakovsky2015imagenet} with \mimg{} having 60K images over 100 classes, and \timg{} having 779K images over 608 classes. Following previous works, we report performance on 1- and 5-shot settings, using 5-way classification tasks.

~\newline\paragraph{Variants of mini/tiered-ImageNet} We introduce several variants of mini/tiered-ImageNet to better understand \laml{} and more broadly the impacts of dataset configuration on pre-training. Specifically, we create mini-60 that consists of 640 classes and 60 samples per class. Mini-60 contains the same number of samples as \mimg{}, though with more classes and fewer samples per class. Mini-60 keeps the same meta-test set as \mimg{} to ensure a fair comparison of test performance of model trained on each dataset in turn. We designed mini-60 to investigate the behavior of \laml{} when encountering a dataset with a high number of base classes and low number of samples per base class. We also use mini-60 to explore how data diversity present in the training data affects the learned representation. Analogous to mini-60, we also introduce tiered-780 as a variant to \timg{}, where we take the total number of samples in \timg{} and calculate the number of samples over the full 1000 ImageNet classes, excluding those used in the meta-test set of \timg{}.

~\newline\paragraph{Meta-Dataset}~\cite{triantafillou2019meta} is a meta-learning classification benchmark combining 10 widely used datasets: ILSVRC-2012 (ImageNet)~\cite{russakovsky2015imagenet}, Omniglot~\cite{lake2015human}, Aircraft~\cite{maji13fine-grained}, CUB200~\cite{cub200}, Describable Textures (DTD)~\cite{cimpoi2014describing}, QuickDraw~\cite{quickdraw}, Fungi~\cite{fungi}, VGG Flower (Flower)~\cite{nilsback2008automated},
Traffic Signs~\cite{houben2013detection} and MSCOCO~\cite{lin2014microsoft}. We use Meta-Dataset to construct several challenging experiment scenarios, including learning a unified model for multiple domains and learning from tasks with conflicting labels.

\subsection{Experiment Settings}
\label{sec:exp_setting}
The standard \fsl{} setting~\cite{finn2017model, snell2017prototypical, ye2020few, wertheimer2021few, bateni2022beyond} assumes that a meta-distribution of tasks is available for training. This translates to meta-learners having access to an exponential number of tasks synthetically generated from the underlying dataset, a scenario unrealistic for practical applications. Recent works additionally assume access to global labels in order to leverage pre-training, in contrast with earlier methods that assume access to only local labels. We will highlight such differences when comparing different methods.

~\newline\paragraph{Generalized Few-Shot Learning (\gfsl{}) Setting} We introduce a more challenging and realistic \fsl{} setting. Specifically, we only allow access to local labels, since global ones may be inaccessible or ill-defined. In addition, we employ a \textit{no-replacement} sampling scheme when synthetically generating tasks from the underlying dataset\footnote{For instance, \mimg{} (38400 training samples) will be randomly split into around 380 tasks of 100 samples}. This sampling process limits the meta-training set to a fixed-size, a standard assumption for most machine learning problems. The fixed size also enables us to evaluates the sample efficiency of different methods. Secondly, no-replacement sampling prevents \laml{} and other meta-learners from trivially learning task relations, a key objective of meta-learning, by matching same samples across tasks. For instance, an identical sample appearing in multiple tasks would allow \laml{} to trivially cluster local classes. Lastly, the sampling process reflects any class imbalance in the underlying dataset, which might present a more challenging problem.

\subsection{Performance Comparison in Standard Setting}
\label{sec:exp_std}
We compare \laml{} to a diverse group of existing  methods on mini- and \timg{} in \cref{tab:comp}. We separate the methods into those requiring global labels and those that do not. We note that the two groups of methods are not directly comparable since global labels provides a significant advantage to meta-learners as discussed previously. The method groupings are intended to demonstrate the effect of pre-training on generalization performance.

\begin{table*}[thb]
\caption{Test accuracy of meta-learning models on \mimg{} and \timg{}.}
\begin{center}
\begin{tabular}{lcc|cc}
\toprule
  & \multicolumn{2}{c}{\mimg{}} & \multicolumn{2}{c}{\timg{}}\\
& $1$-shot & $5$-shot & $1$-shot & $5$-shot\\
\midrule
Global Labels &&&\\
\midrule
Simple CNAPS \cite{bateni2022beyond} &  $53.2 \pm -$ & $70.8 \pm -$ & $63.0 \pm -$ & $80.0 \pm - $\\
LEO \cite{rusu2018meta} & $61.7 \pm 0.7$ & $77.6 \pm 0.4$ & $66.3 \pm 0.7$ & $81.4 \pm 0.6$ \\
TASML \cite{wang2020structured} & $62.0 \pm 0.5$ & $78.2 \pm 0.5$ & $66.4 \pm 0.4$ & $82.6 \pm 0.3$\\
RFS \cite{tian2020rethinking} & $62.0 \pm 0.4$ & $79.6 \pm 0.3$  & $69.4 \pm 0.5$ & $84.4 \pm 0.3$\\
ProtoNet (with pre-train) \cite{wertheimer2021few} & $62.4 \pm 0.2$ & $80.5 \pm 0.1$ & $68.2 \pm 0.2$ &  $84.0 \pm 0.3$\\
Meta-Baseline \cite{chen2021meta} & $63.2 \pm 0.2$ & $79.3 \pm 0.2$ & $68.6 \pm 0.3$ &  $83.7 \pm 0.2$\\
FEAT \cite{ye2020few} & $\mbf{66.7 \pm 0.2}$ & $82.0 \pm 0.1$ & $70.8 \pm 0.2$ & $84.8 \pm 0.2$\\
FRN \cite{wertheimer2021few} & $66.4\pm 0.2$ & $\mbf{82.8 \pm 0.1}$ & $\mathbf{71.2 \pm 0.2}$ & $\mathbf{86.0\pm 0.2}$\\
DeepEMD \cite{zhang2020deepemd} & $65.9 \pm 0.8$ & $82.4 \pm 0.6$ & $\mbf{71.2 \pm 0.9}$ &  $\mathbf{86.0 \pm 0.6}$\\
\midrule
Local Labels &&&\\
\midrule
MAML \cite{finn2017model} & $48.7 \pm 1.8$ &  $63.1 \pm 0.9$ & $51.7 \pm 1.8$ &  $70.3 \pm 0.8$\\
ProtoNet \cite{snell2017prototypical} & $49.4 \pm 0.8$ & $68.2 \pm 0.7$ & $53.3 \pm 0.9$ & $72.7 \pm 0.7$\\
R2D2 \cite{bertinetto2018meta} & $51.9 \pm 0.2$ & $68.7 \pm 0.2$ & $65.5 \pm 0.6$ & $80.2 \pm 0.4$\\
MetaOptNet \cite{lee2019meta} & $62.6 \pm 0.6$ & $78.6 \pm 0.5$ & $66.0 \pm 0.7$ & $81.5 \pm 0.6$\\
Shot-free \cite{ravichandran2019few} & $59.0 \pm {\rm n/a}$ & $77.6 \pm {\rm n/a}$ & $63.5 \pm {\rm n/a}$ & $82.6 \pm {\rm n/a}$\\
\laml{} (pre-train only) & $64.5 \pm 0.4$ & $81.5 \pm 0.3$  & $69.5 \pm 0.5$ & $84.3 \pm 0.3$\\
\laml{} & $\mbf{65.8 \pm 0.4}$ & $\mbf{82.8 \pm 0.3}$ & $\mbf{70.5 \pm 0.5}$ & $\mbf{85.9 \pm 0.3}$\\
\bottomrule
\end{tabular}
\end{center}
\label{tab:comp}
\end{table*}

\cref{tab:comp} clearly shows that ``global-labels'' methods leveraging  pre-training generally outperform ``local-labels'' methods except \laml{}. We highlight that the re-implementation of ProtoNet in \cite{wertheimer2021few} benefits greatly from pre-training, outperforming the original by over 10\% across the two datasets. Similarly, while RFS and R2D2 both learn a fixed representation and only adapt the classifier based on each task, RFS's pre-trained representation clearly outperforms R2D2's meta-learned representation. We further note that state-of-the-art methods such as DeepEMD and FEAT are heavily reliant on pre-training and performs drastically worse in \gfsl{} setting, as we will discuss in \cref{sec:exp_general}.

In the local-labels category, \laml{} outperforms existing methods thanks to its ability to still exploit pre-training using the inferred labels. \laml{} achieves about 4\% improvement over the next best method in all settings. Across both categories, \laml{} obtains performance competitive to state-of-the-art methods such as FRN, FEAT and DeepEMD despite having no access to global labels. This indicates that \laml{} is able to infer meaningful clusters to substitute global labels and obtains performance similar to methods having access to global labels. We will provide further quantitative results on the clustering algorithm in \cref{sec:exp-clustering}. Lastly, we note that \laml{} also outperforms several methods from the ``global-label'' category, such as RFS and Meta-Baseline. We attribute \laml{}'s better performance to more robust representation via augmented pre-training and our formulation for meta fine-tuning. In particular, we explicitly preserve the pre-trained representation using residual connections, in contrast to meta fine-tuning the entire representation model as in ProtoNet and Meta-Baseline. Consistent with \cite{goldblum2020unraveling}, the results suggest that meta fine-tuning the entire representation model could negate the advantages of pre-training shown in our theoretical analysis.

\subsection{Performance Comparison in Generalized Setting}
\label{sec:exp_general}
We evaluate a representative set of few-shot learners under \gfsl{}. For this setting, we introduce two new experimental scenario using Meta-Dataset to simulate task heterogeneity.

In the first scenario, we construct the meta-training set from Aircraft, CUB and Flower, which we simply denote as "Mixed". Tasks are sampled independently from one of the three datasets. For meta-testing, we sample 1500 tasks from each dataset and report the average accuracy. The chosen datasets are intended for fine-grained classification in aircraft models, bird species and flower species respectively. Thus the meta-training tasks share the broad objective of fine-grained classification, but are sampled from three distinct domains. A key challenge of this scenario is to learn a unified model across multiple domains, without any explicit knowledge about them or the global labels.

\begin{table*}[hbt]
    \caption{Test Accuracy on Aircraft, CUB and VGG Flower (Mixed dataset). A single model is trained for each method over all tasks.}
    \label{tab:comp_meta_detail}
    \centering
    \begin{tabular}{c|cc|cc|cc|cc}
    \toprule
    & \multicolumn{2}{c}{Aircraft} & \multicolumn{2}{c}{CUB} & \multicolumn{2}{c}{VGG Flower} & \multicolumn{2}{c}{Average}\\
    & 1-shot & 5-shot & 1-shot & 5-shot & 1-shot & 5-shot & 1-shot & 5-shot\\
    \midrule
    ProtoNet~\cite{snell2017prototypical} & $35.1 \pm 0.4$ & $51.0 \pm 0.5$ & $32.7 \pm 0.4$ & $46.4 \pm 0.5$ & $56.7 \pm 0.5$ & $73.8 \pm 0.4$ & $41.7 \pm 0.7$ & $57.5 \pm 0.7$\\
    MatchNet~\cite{vinyals2016matching} & $31.4\pm0.4$ & $39.4\pm0.5$ & $42.7\pm0.5$ & $54.1\pm0.5$ & $62.5\pm0.5$ & $70.0\pm0.1$ & $45.7\pm0.1$ & $54.5\pm0.1$\\
    R2D2~\cite{bertinetto2018meta} & $67.7 \pm 0.6$ & $82.8 \pm 0.4$ & $53.8 \pm 0.5$ & $69.2 \pm 0.5$ & $65.4 \pm 0.5$ & $83.3 \pm 0.3$ &  $61.9 \pm 0.5$ & $78.6 \pm 0.4$\\
    DeepEMD~\cite{zhang2020deepemd} & $34.7 \pm 0.7$ & $47.8 \pm 1.4$ & $39.3 \pm 0.7$ & $52.1 \pm 1.4$ & $61.3 \pm 0.9$ & $74.5 \pm 1.4$ & $45.1 \pm 0.7$ & $58.1 \pm 1.2$\\
    FEAT~\cite{ye2020few} & $61.7 \pm 0.6$ & $75.8 \pm 0.5$ & $59.6 \pm 0.6$ & $73.1 \pm 0.5$ & $62.9 \pm 0.6$ & $76.0 \pm 0.4$ &  $60.9 \pm 0.7$ & $75.0 \pm 0.5$ \\
    FRN~\cite{wertheimer2021few} & $60.7 \pm 0.7$ & $77.6 \pm 0.5$ & $61.9 \pm  0.7$ & $77.7 \pm  0.5$ & $65.2 \pm  0.6$ & $81.2 \pm 0.5$ & $63.1 \pm 0.7$ & $79.7 \pm 0.5$\\
    \laml{} & $\mbf{78.2 \pm 0.5}$ & $\mbf{89.5 \pm 0.3}$ & $\mbf{73.8 \pm 0.6}$ & $\mbf{88.7 \pm 0.3}$ & $\mbf{76.6 \pm 0.4}$ & $\mbf{91.5 \pm 0.2}$ & $\mbf{76.2 \pm 0.3}$ & $\mbf{89.9 \pm 0.2}$ 
    \end{tabular}
\end{table*}

\cref{tab:comp_meta_detail} show that \laml{} outperforms all baselines under \gfsl{} setting. In particular, \laml{} achieves a large margin of 10\% improvement over the baselines, including state-of-the-art models FEAT, FRN and DeepEMD, which performed competitively against \laml{} in \cref{tab:comp}. In particular, FEAT and DeepEMD performed noticeably worse, indicating the methods' reliance on pre-trained representation and the difficulty of meta-learning from scratch with complex base learners. FRN outperforms FEAT and DeepEMD, as it is designed to also work without pre-training.

In the second scenario, we consider meta-training tasks with heterogeneous objectives, leading to conflicting task-labels and consequently ill-defined global labels. For the Aircraft dataset, each sample from the base dataset has three labels associated with it, including variant, model and manufacturer\footnote{E.g. ``Boeing 737-300'' indicates manufacturer, model, and variant} that form a hierarchy. We sample tasks based on each of the three labels and creates a meta-training set containing three different task objectives: classifying fine-grained differences between model variants, classifying different airplanes, and classifying different airplane manufacturers. To differentiate from the original dataset, we refer to this meta-training set as H-Aircraft. The training data is particularly challenging given the competing goals across different tasks: a learner is required to recognize fine-grained differences between airplane variants, while being able to identify general similarities within the same manufacturer. The meta-training data also reflects the class imbalance of underlying dataset, with samples from Boeing and Airbus over-represented.

\begin{table}[t]
\caption{Test accuracy on H-Aircraft in the generalized setting.}
\begin{center}
\begin{tabular}{lcc}
\toprule
& $1$-shot & $5$-shot\\
\midrule
ProtoNet~\cite{snell2017prototypical} & $47.8 \pm 0.5$ & $66.8 \pm 0.5$\\
MatchNet~\cite{vinyals2016matching} &  $65.6 \pm 0.2$ & $78.7 \pm 0.2$\\
R2D2 & $75.1 \pm 0.3$ & $86.4 \pm 0.2$\\
DeepEMD~\cite{zhang2020deepemd} & $51.3 \pm 0.5$ & $65.6 \pm 0.8$\\ 
FEAT~\cite{ye2020few} & $77.6 \pm 0.6$ & $87.3 \pm 0.4$\\
FRN~\cite{wertheimer2021few}  & $81.9 \pm 0.4$ & $91.0 \pm 0.2$\\
\laml{} & $\mbf{84.8 \pm 0.3}$ & $\mbf{92.9 \pm 0.2}$\\
\midrule
Oracle & $84.4 \pm 0.3$ & $93.1 \pm 0.2$\\
\bottomrule
\end{tabular}
\end{center}
\label{tab:comp_air}
\end{table}

\cref{tab:comp_air} shows that \laml{} outperforms all baselines for H-Aircraft. To approximate the oracle performance when ground truth labels were given, we optimize a supervised semantic softmax loss~\cite{ridnik2021imagenet} over the hierarchical labels. Specifically, we train the (approximate) oracle to minimize a multi-task objective combining individual cross entropy losses over the three labels. \laml{} performs competitively against the oracle, indicating the robustness of the proposed labeling algorithm in handling ill-defined labels and class imbalance.

The experimental results suggest that \laml{} performs robustly in both the standard and \gfsl{} settings. In contrast, baseline methods perform noticeably worse in the latter, due to the absence of pre-training and limited training data.\\

\paragraph{Connection to theoretical results} We comment on the empirical results so far in relation to our theoretical analysis. The empirical results strongly indicate that pre-training produces robust meta-representations for \fsl{} by exploiting contextual information from global labels. This is consistent with our observation that pre-training would achieve a smaller error than its meta-learning counterpart. On the other hand, the results also validate our hypothesis that the pre-trained representation can be further improved, since the pre-trained representation is not explicitly optimized for handling novel classes. In particular, FEAT, FRN, DeepEMD and \laml{} all outperform the pre-trained representation from \cite{tian2020rethinking} by further adapting it.

\subsection{Performance Comparison on Meta-Dataset}
We further evaluate \laml{} on the full Meta-Dataset to assess our method's generalization performance. We adopt the experiment setting of training on ImageNet only and testing on all meta-test sets~\cite{triantafillou2019meta}, to clearly evaluate out-of-distribution generalization. We note that state-of-the-art methods~\citep[e.g.][]{li2022cross,triantafillou2021learning, saikia2020optimized} on Meta-Dataset are heavily reliant on pre-training with global labels, while \laml{} only has access to a collection of \fsl{} tasks and has to infer such labels. In \cref{tab:comp_meta}, we compare \laml{} with state-of-the-art methods including fine tuning~\cite{triantafillou2019meta}, ALFA+fo-Proto-MAML~\cite{triantafillou2019meta}, BOHB~\cite{saikia2020optimized}, FLUTE~\cite{triantafillou2021learning} and TSA~\cite{li2022cross}.

The results show that \laml{} is able to effectively infer meaningful global labels and achieve robust generalization to novel datasets, achieving an average accuracy of 68.5\%. Despite not being given global labels for pre-training, \laml{} only trails behind TSA while outperforming other methods. In addition, we note that the task-specific tuning adopted by TSA is orthogonal -- but compatible -- to \laml{}: by combining \laml{} with TSA (see \cref{para:tsa} for details) we are able to further improve our generalization performance, outperforming the original TSA approach on 10 out of the 13 meta-test sets (\cref{tab:comp_meta} last column). These results further demonstrate the robustness of \laml{} in learning robust representations over a large number of \fsl{} tasks, and the efficacy of task-specific fine-tuning in improving generalization of novel tasks.

\begin{table*}[thb]
\caption{Test accuracy on Meta-Dataset, training on ImageNet only, using ResNet-18 for all models.\label{tab:comp_meta}}
\begin{center}
\begin{adjustbox}{center}
\begin{tabular}{lcccccc|cc}
\toprule
\multirow{2}{*}{Test Dataset} & \multirow{2}{*}{Finetune \cite{triantafillou2019meta}} & \multirow{2}{2cm}{fo-Proto-MAML \cite{triantafillou2019meta}} & \multirow{2}{*}{BOHB \cite{saikia2020optimized}} & \multirow{2}{*}{FLUTE \cite{triantafillou2021learning}} & \multirow{2}{2cm}{Meta-Baseline \cite{chen2021meta}} & \multirow{2}{*}{TSA \cite{li2022cross}} & \multirow{2}{*}{MeLa} & \multirow{2}{2cm}{MeLa+TSA}\\
&&&&&&&&\\
\midrule
ImageNet & $45.8 \pm 1.1$ & $52.8 \pm 1.1$ & $51.9 \pm 1.1$ & $46.9 \pm 1.1$ & $59.2 \pm -$ & $57.4 \pm 1.0$ & $59.3 \pm 1.1$ & $\mathbf{61.3 \pm} 1.1$\\
Omniglot & $60.9 \pm 1.6$ & $61.9 \pm 1.5$ & $67.6 \pm 1.2$ & $61.6 \pm 1.4$ & $69.1 \pm -$ & $74.2 \pm 1.2$ & $66.2 \pm 1.4$ & $\mathbf{74.8 \pm} 1.4$\\
Aircraft & $68.7 \pm 1.3$ & $63.4 \pm 1.1$ & $54.1 \pm 0.9$ & $48.5 \pm 1.0$ & $54.1 \pm -$ & $66.1 \pm 1.0$ & $67.9 \pm 1.0$ & $\mathbf{80.7 \pm} 1.1$\\
Birds & $57.3 \pm 1.3$ & $69.8 \pm 1.1$ & $70.7 \pm 0.9$ & $47.9 \pm 1.0$ & $77.3 \pm -$ & $73.9 \pm 0.9$ & $78.7 \pm 0.8$ & $\mathbf{81.6 \pm} 0.9$\\
Textures & $69.0 \pm 0.9$ & $70.8 \pm 0.9$ & $68.3 \pm 0.8$ & $63.8 \pm 0.8$ & $76.0 \pm -$ & $76.2 \pm 0.7$ & $77.0 \pm 0.8$ & $\mathbf{78.9 \pm} 0.8$\\
QuickDraw & $42.6 \pm 1.2$ & $59.2 \pm 1.2$ & $50.3 \pm 1.0$ & $57.5 \pm 1.0$ & $57.3 \pm -$ & $64.6 \pm 0.9$ & $64.3 \pm 1.0$ & $\mathbf{71.5 \pm} 1.0$\\
Fungi & $38.2 \pm 1.0$ & $41.5 \pm 1.2$ & $41.4 \pm 1.1$ & $31.8 \pm 1.0$ & $45.4 \pm -$ & $46.8 \pm 1.1$ & $\mathbf{47.3 \pm} 1.2$ & $47.3 \pm 1.2$\\
VGG Flower & $85.5 \pm 0.7$ & $86.0 \pm 0.8$ & $87.3 \pm 0.6$ & $80.1 \pm 0.9$ & $89.6 \pm -$ & $91.3 \pm 0.5$ & $89.9 \pm 0.7$ & $\mathbf{93.5 \pm} 0.8$\\
Traffic Sign & $66.8 \pm 1.3$ & $60.8 \pm 1.3$ & $51.8 \pm 1.0$ & $46.5 \pm 1.1$ & $66.2 \pm -$ & $82.5 \pm 0.9$ & $65.4 \pm 1.1$ & $\mathbf{86.9 \pm} 1.0$\\
MSCOCO & $34.9 \pm 1.0$ & $48.1 \pm 1.1$ & $48.0 \pm 1.0$ & $41.4 \pm 1.0$ & $\mathbf{55.7 \pm} -$ & $55.2 \pm 1.0$ & $54.2 \pm 1.1$ & $54.4 \pm 1.1$\\
MNIST & - & - & - & $80.8 \pm 0.8$ & - & $94.0 \pm 0.5$ & $88.1 \pm 0.6$ & $\mathbf{94.6 \pm} 0.6$\\
CIFAR-10 & - & - & - & $65.4 \pm 0.8$ & - & $\mathbf{79.4 \pm} 0.8$ & $70.4 \pm 0.8$ & $76.9 \pm 0.8$\\
CIFAR-100 & - & - & - & $52.7 \pm 1.1$ & - & $\mathbf{70.5 \pm} 0.9$ & $62.0 \pm 1.0$ & $69.3 \pm 0.9$\\
\midrule
Average & 57.0 & 61.4 & 59.1 & 55.8 & 65.0 & 71.7 & 68.5 & \textbf{74.7}\\
Average Rank & 6.0 & 5.2 & 5.6 & 6.5 & 4.0 & 2.4 & 3.0 & \textbf{1.3}\\
\bottomrule
\end{tabular}
\end{adjustbox}
\end{center}
\end{table*}

\subsection{Ablations on Pre-training}
\label{sec:exp_pretrain}
Given the significance of pre-training on final performance, we investigate how the rotation data augmentation and data configuration impact the performance of the pre-trained representation. We focus on the effects of dataset sizes and the number of classes present in the dataset.\\

\paragraph{Rotation-Augmented Pre-training} In \cref{sec:method_rot}, we proposed to increase both the size and the number of classes in a dataset via input rotation. By rotating the input images by the multiples of $90^\circ$, we quadruple both the size and the number of classes in a dataset. In \cref{tab:comp_rotation}, we compare the performance of standard pre-training against the rotation-augmented one, for multiple datasets. We use the inferred labels from \laml{} for pre-training.

\begin{table}[t]
\caption{Test accuracy comparison between pre-trained representations: standard vs. rotation-augmented}
\begin{center}
\scriptsize
\begin{tabular}{lcccc}
\toprule
& \multicolumn{2}{c}{$1$-shot} & \multicolumn{2}{c}{$5$-shot}\\
& standard & rotation & standard & rotation\\
\midrule
\mimg{} & $62.0 \pm 0.4$ & $64.5 \pm 0.4$ & $79.6 \pm 0.3$ & $81.5 \pm 0.3$\\
mini-60 & $63.9 \pm 0.7$ & $67.7 \pm 0.5$ & $81.5 \pm 0.5$ & $84.1 \pm 0.5$\\
\timg{} & $69.1 \pm 0.5$ & $69.5 \pm 0.6$ & $83.9 \pm 0.3$ &  $84.3 \pm 0.4$\\
tiered-780 & $78.0 \pm 0.6$ & $78.2 \pm 0.6$ & $89.9 \pm 0.4$ & $90.1 \pm 0.4$\\
H-Aircraft & $79.2 \pm 0.5$ & $84.8 \pm 0.3$ & $89.4 \pm 0.3$ & $92.9 \pm 0.2$\\
\bottomrule
\end{tabular}
\end{center}
\label{tab:comp_rotation}
\end{table}

The results suggest that rotation-augmented pre-training consistently improves the quality of the learned representation. It achieves over 3\% improvements in both \mimg{} and H-aircraft, while obtains about 0.5\% in \timg{}. It is clear that rotation augmentation works the best with smaller datasets with fewer classes. As the dataset increases in size and diversity, the additional augmentation has less impact on the learned representation.

~\newline\paragraph{Effects of Class Count} We further evaluate the effects of increasing number of classes in a dataset while maintaining the dataset size fixed. For this, we compare the performance of \mimg{} and \timg{} with their respective variants mini-60 and tiered-780.

\cref{tab:comp_rotation} suggests that given a fixed size dataset, having more classes improves the quality of the learned representation compared to having more samples per class. We hypothesize that classifying more classes lead to more discriminative and robust features, while standard $\ell_2$ regularization applied during pre-training prevents overfitting despite having fewer samples per class.

Overall, the experiments suggest that pre-training is a highly scalable process where increasing either data diversity or dataset size will lead to more robust representation for \fsl{}. In particular, the number of classes in the dataset appears to play a more significant role than the dataset size.

\subsection{Ablations on The Clustering Algorithm}\label{sec:exp-clustering}
The crucial component of \laml{} is \cref{alg:self-label}, which infers a notion of global labels and allows pre-training to be exploited in \gfsl{} setting. We perform several ablation studies to better understand \cref{alg:self-label}.\\

\paragraph{The Effects of No-replacement Sampling} We study the effects of no-replacement sampling, since it affects both the quality of the similarity measure through $\embd^{\rm sim}$ and the number of tasks available for inferring global clusters. The results are shown in \cref{tab:embd_0}.

\begin{table*}[tb]
    \centering
    \caption{The effects of no-replacement sampling on the clustering algorithm}
    \begin{tabular}{l|cc|cc|cc}
    \toprule
    Dataset & \multicolumn{2}{c}{\mimg{}} & \multicolumn{2}{c}{\timg{}} & \multicolumn{2}{c}{mini-60}\\
    Replacement & Yes & No & Yes & No & Yes & No\\
    \midrule
        Tasks Clustered (\%) & 100 & 98.6 & 99.9 & 89.5 & 98.7 & 97.8 \\
        Clustering Acc (\%) & 100 & 99.5 & 96.4 & 96.4 & 72.8 & 70.1 \\
        1-shot Acc (\%) & $65.8 \pm 0.4$ & $65.8 \pm 0.5$ & $70.5 \pm 0.5$ & $70.5 \pm 0.5$ & $68.4 \pm 0.7$ & $68.4 \pm 0.5$ \\
        5-shot Acc (\%) & $82.8 \pm 0.3$ & $82.8 \pm 0.4$ & $85.9 \pm 0.3$ & $85.9 \pm 0.3$ & $84.0 \pm 0.5$ & $84.0 \pm 0.5$ \\
    \end{tabular}
    \label{tab:embd_0}
\end{table*}

In \cref{tab:embd_0}, clustering accuracy is computed by assigning the most frequent ground truth label in each cluster as the desired target. Percentage of tasks clustered refers to the tasks that map to $k$ unique clusters by \cref{alg:self-label}. The clustered tasks satisfy both constraints imposed by local labels and are used for pre-training.

For both sampling processes, \laml{} achieves comparable performances across all three datasets. This indicates the robustness of \cref{alg:self-label} in inferring suitable labels for pre-training, even when task samples do not repeat across tasks. This shows that \cref{alg:self-label} is not trivially matching identical samples across task, but relying on $\embd^{\rm sim}$ for estimating sample similarity. We note that mini-60 is particularly challenging under no-replacement sampling, with only 384 tasks in the meta-training set over 640 ground truth classes.\\

\paragraph{Effects of Pruning Threshold}
In \cref{alg:self-label}, the pruning threshold is controlled by the hyper-parameter $q$. We investigate how different $q$ values affect the number of clusters estimated by the algorithm and the corresponding test accuracy. \cref{tab:q_sens} suggest that \laml{} is robust to a wide range of $q$ and obtains representations similar to that produced by the ground truth labels. While it is possible to replace $q$ with directly guessing the number of clusters in \cref{alg:self-label}, we note that tuning for $q$ is more convenient since appropriate $q$ values appear to empirically concentrate within a much narrower range, compared to the possible numbers of global clusters present in a dataset.\\

\begin{table*}[thb]
    \caption{Test accuracy (pre-train only) and cluster count for various pruning thresholds, 5-shot setting}
    \centering
    \begin{tabular}{ccc|ccc|ccc}
    \toprule
    \multicolumn{3}{c|}{\mimg{} (64 classes)} & \multicolumn{3}{c|}{\timg{} (351 classes)} & \multicolumn{3}{c}{mini-60 (640 classes)}\\
    \multicolumn{3}{c|}{Oracle Pre-train: 81.5\%} & \multicolumn{3}{c|}{Oracle Pre-train: 84.5\%} & \multicolumn{3}{c}{Oracle Pre-train: 85.2\%}\\
    \midrule
    $q$ & No. Clusters & \laml{} & $q$ & No. Clusters & \laml{} & $q$ & No. Clusters & \laml{}\\
     3 & 64 & $81.5 \pm 0.4$ & 3.5 & 351 & $84.3 \pm 0.4$ & 4.5 & 463 & $84.0 \pm 0.4$\\
     4 & 75 & $81.1 \pm 0.4$ & 4.5 & 373 & $84.1 \pm 0.3$ & 5.5 & 462 & $83.8 \pm 0.4$\\
     5 & 93 & $80.9 \pm 0.4$ & 5.5 & 444 & $84.0 \pm 0.5$ & 6.5 & 472 & $84.0 \pm 0.4$\\
    \end{tabular}
     \label{tab:q_sens}
\end{table*}

\paragraph{Inferred Labels vs. Oracle Labels}
From \cref{tab:embd_0,tab:q_sens}, we observe that it may be unnecessary to fully recover the oracle labels (when they exists). For mini-60, \laml{} inferred 463 clusters over 640 classes, which implies mixing of the oracle classes. However, the inferred labels still perform competitively against the oracle labels, suggesting the robustness of the proposed method. The results also suggest that we may improve the recovery of the oracle labels by sampling more tasks from the meta-distribution.\\

\paragraph{The Importance of Local Constraints}
\cref{alg:self-label} enforces consistent assignment of task samples given their local labels. To understand the importance of enforcing these constraints, we consider an ablation study where \cref{alg:self-label} is replaced with the standard $K$-means algorithm. The latter is fully unsupervised and ignores any local constraints. We initialize the $K$-means algorithm with 64 clusters for \mimg{} and 351 clusters for \timg{}, matching the true class counts in respective datasets.

\begin{table*}[thb]
    \footnotesize
    \centering
    \caption{Test Accuracy (Pre-train only) using \cref{alg:self-label} vs. $K$-mean Clustering}
    \begin{tabular}{l|ccc|ccc}
    \toprule
    & \multicolumn{3}{c}{\mimg{}} & \multicolumn{3}{c}{\timg{}}\\
    Cluster Alg. & Cluster Acc & 1-shot & 5-shot & Cluster Acc & 1-shot & 5-shot\\ 
    \midrule
    \cref{alg:self-label} (\laml{}) & 100 & $64.5\pm 0.4$ & $81.5\pm 0.3$ & 96.4 & $69.5\pm 0.5$ & $84.3\pm 0.3$\\
    $K$-mean & 84.9 & $60.7\pm 0.5$ & $76.9\pm 0.3$ & 28.2 & $64.8\pm 0.6$ & $78.8\pm 0.5$ \\
    \end{tabular}
    \label{tab:comp_kmean}
\end{table*}

\cref{tab:comp_kmean} indicates that enforcing local constraints is critical for generalization performance during meta-testing. In particular, test accuracy drops by over 5\% for \timg{}, when the $K$-means algorithm ignores local task constraints. Among the two constraints, we note that \cref{eq:cluster_centroid} appears to be the more important one since nearly all tasks automatically match $K$ unique clusters in our experiments (see tasks clustered in \cref{tab:embd_0}).\\

\paragraph{Domain Inference for multi-domain tasks}
In addition to inferring global labels, We may further augment \cref{alg:self-label} to infer the different domains present in a meta-training set, if we assume that all samples within a task belongs to a single domain. Given the assumption, two global clusters are connected if they both contain samples from the same task. This is illustrated in \cref{fig:domain_inf}. Consequently, inferred clusters form an undirected graph with multiple connected components, with each representing a domain. We apply the above algorithm to the multi-domain Mixed Dataset consisting of Aircraft, CUB and Flower.

\cref{fig:meta_domain} visualizes the inferred domains on the multi-domain scenario. For each inferred cluster, we project its centroid onto a 2-dimensional point using UMAP~\cite{mcinnes2018umap}. Each connected component is assigned a different color. Despite some mis-clustering within each domain, we note that \cref{alg:self-label} clearly separates the three domains present in the meta-training set and recovers them perfectly.

Domain inference is important for multi-domain scenario as it enables domain-specific pre-training. Recent works~\citep[e.g.][]{liu2020universal, li2021universal, dvornik2020selecting} on Meta-Dataset have shown that combining domain-specific representation into a universal representation is empirically more advantageous than training on all domains together. Lastly, we remark that multi-domain meta-learning is also crucial for obtaining robust representation suitable for wider range of novel tasks, including cross-domain transfer.

\begin{figure*}[t]
    \centering
    \subfloat[\textbf{Domain Inference via Connected Components}]{
    \includegraphics[width=0.5\textwidth]{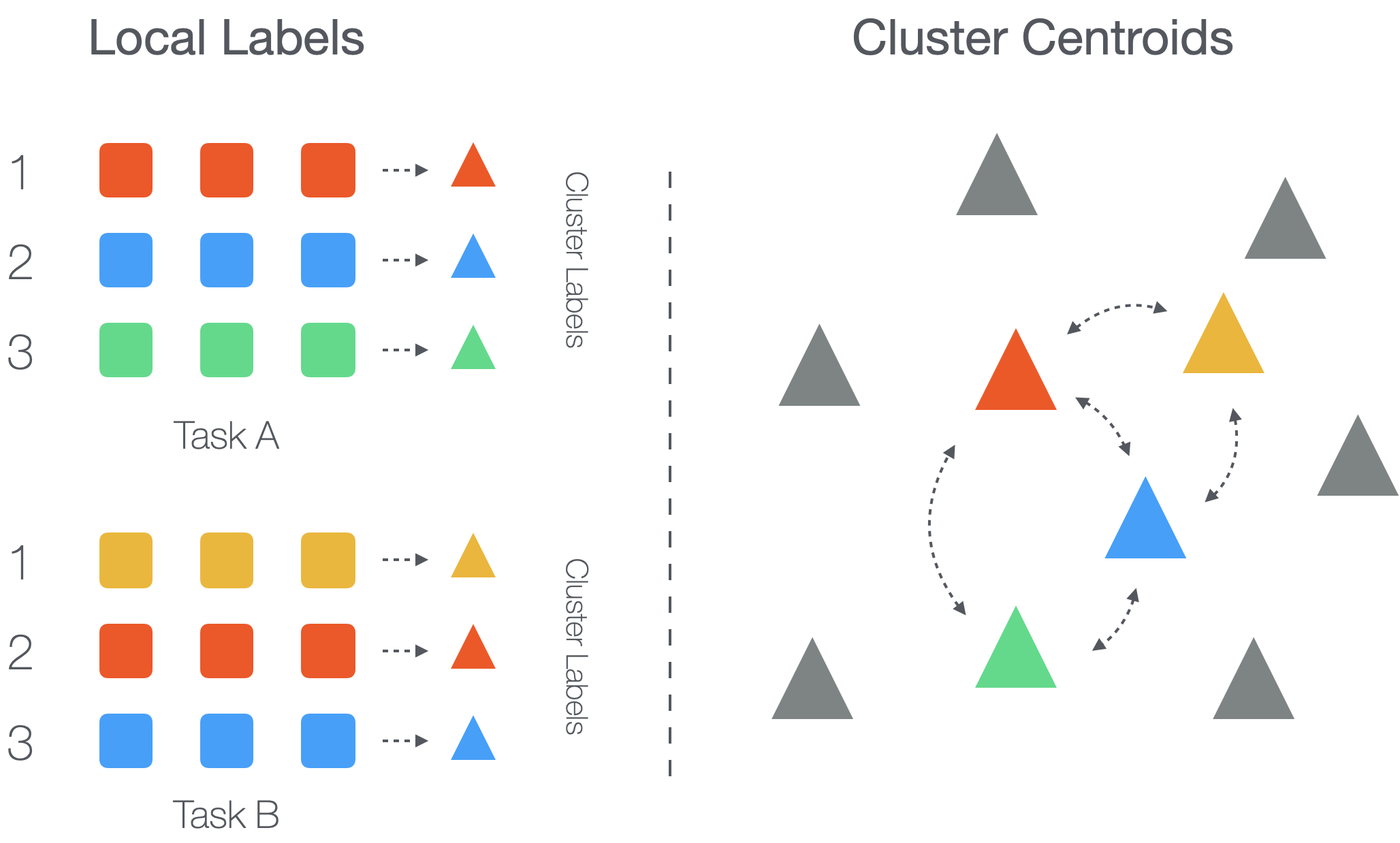}\label{fig:domain_inf}}
    \hfil
    \subfloat[\textbf{UMAP visualization of different domains in multi-domain dataset}]{
    \includegraphics[width=0.35\textwidth]{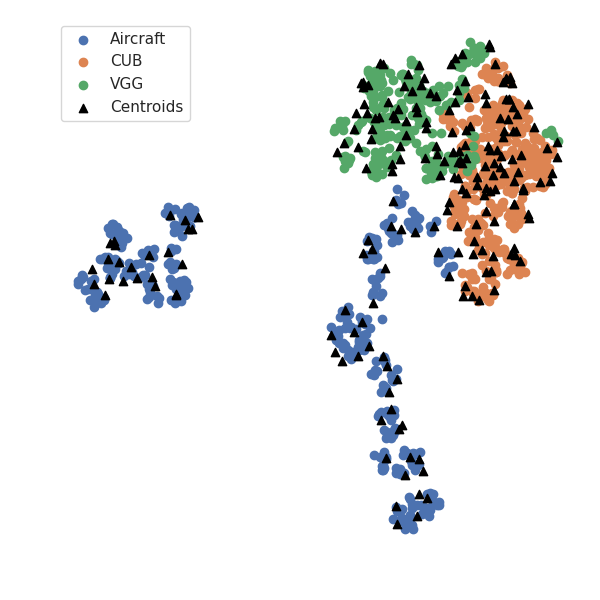}\label{fig:meta_domain}}
    \caption{{\bfseries (a)} The coloured clusters (red, green, blue and yellow) are connected since they both contains samples from the same task. Domains can be inferred by computing the connected components of the inferred clusters. {\bfseries (b)} UMAP visualization of the three inferred domains from the 5-shot Mixed dataset containing Aircraft, CUB and VGG. Circles are the means (using the pretrained features) of the instances in each task averaged per local class while triangles are the learned centroids, all vectors are embedded using UMAP. The three domains are recovered perfectly.}
\end{figure*}

\section{Conclusion}
In this work we focused on the role played by pre-training in meta-learning applications, with particular attention to few-shot learning problems. Our analysis was motivated by the recent popularity of pre-training as a key stage in most state-of-the-art \fsl{} pipelines. We first investigated the benefits of pre-training from a theoretical perspective. We showed that in some settings this strategy enjoys significantly better sample complexity than pure meta-learning approaches, hence offering a justification for its empirical performance and wide adoption in practice. 

We then proceeded to observe that pre-training requires access to global labels of the classes underlying the \fsl{} problem. This might not always be possible, due to phenomena like heterogeneous labeling (i.e. multiple labelers having different labeling strategies) or contextual restrictions like privacy constraints. We proposed Meta-Label Learning (\laml{}) as a strategy to address this concern. We compared \laml{} with state-of-the-art methods on a number of tasks including well-established standard benchmarks as well as new datasets we designed to capture the above limitations on task labels. We observed that \laml{} is always comparable or better than previous approaches and very robust to lack of global labels or the presence of conflicting labels. 

More broadly, our work provides a solid foundation for understanding existing \fsl{} methods, in particular the vital contribution of pre-training towards generalization performance. We also demonstrated that pre-training scales well with the size of datasets and data diversity, which in turn leads to more robust few-shot models. Future research may focus on further theoretical understanding of pre-training and better pre-training processes.

\appendices

\ifCLASSOPTIONcompsoc
  \section*{Acknowledgments}
\else
  \section*{Acknowledgment}
\fi

Ruohan Wang acknowledges funding from Career Development Fund (grant C210812045) from A{*}STAR Singapore. John Isak Texas Falk acknowledges funding from the computer science department at UCL. Massimiliano Pontil acknowledges financial support from PNRR MUR project PE0000013-FAIR and the European Union (Projects 951847 and 101070617). Carlo Ciliberto acknowledges founding from the Royal Society (grant SPREM RGS$\backslash$ R1$\backslash$201149) and Amazon Research Award
(ARA).

\ifCLASSOPTIONcaptionsoff
  \newpage
\fi

\bibliographystyle{abbrvnat}

\bibliography{biblio}

\begin{thebibliography}{85}
\providecommand{\natexlab}[1]{#1}
\providecommand{\url}[1]{\texttt{#1}}
\expandafter\ifx\csname urlstyle\endcsname\relax
  \providecommand{\doi}[1]{doi: #1}\else
  \providecommand{\doi}{doi: \begingroup \urlstyle{rm}\Url}\fi

\bibitem[Afrasiyabi et~al.(2022)Afrasiyabi, Larochelle, Lalonde, and
  Gagn{\'e}]{afrasiyabi2022matching}
A.~Afrasiyabi, H.~Larochelle, J.-F. Lalonde, and C.~Gagn{\'e}.
\newblock Matching feature sets for few-shot image classification.
\newblock In \emph{Proceedings of the IEEE/CVF Conference on Computer Vision
  and Pattern Recognition}, pages 9014--9024, 2022.

\bibitem[Alayrac et~al.(2022)Alayrac, Donahue, Luc, Miech, Barr, Hasson, Lenc,
  Mensch, Millican, Reynolds, et~al.]{alayrac2022flamingo}
J.-B. Alayrac, J.~Donahue, P.~Luc, A.~Miech, I.~Barr, Y.~Hasson, K.~Lenc,
  A.~Mensch, K.~Millican, M.~Reynolds, et~al.
\newblock Flamingo: a visual language model for few-shot learning.
\newblock \emph{Advances in Neural Information Processing Systems},
  35:\penalty0 23716--23736, 2022.

\bibitem[Amos and Kolter(2017)]{amos2017optnet}
B.~Amos and J.~Z. Kolter.
\newblock Optnet: Differentiable optimization as a layer in neural networks.
\newblock In \emph{Proceedings of the 34th International Conference on Machine
  Learning}, 2017.

\bibitem[Bateni et~al.(2020)Bateni, Goyal, Masrani, Wood, and
  Sigal]{bateni2020improved}
P.~Bateni, R.~Goyal, V.~Masrani, F.~Wood, and L.~Sigal.
\newblock Improved few-shot visual classification.
\newblock In \emph{Proceedings of the IEEE/CVF Conference on Computer Vision
  and Pattern Recognition}, pages 14493--14502, 2020.

\bibitem[Bateni et~al.(2022)Bateni, Barber, Goyal, Masrani, van~de Meent,
  Sigal, and Wood]{bateni2022beyond}
P.~Bateni, J.~Barber, R.~Goyal, V.~Masrani, J.-W. van~de Meent, L.~Sigal, and
  F.~Wood.
\newblock Beyond simple meta-learning: Multi-purpose models for multi-domain,
  active and continual few-shot learning.
\newblock \emph{arXiv preprint arXiv:2201.05151}, 2022.

\bibitem[Bertinetto et~al.(2019)Bertinetto, Henriques, Torr, and
  Vedaldi]{bertinetto2018meta}
L.~Bertinetto, J.~F. Henriques, P.~H. Torr, and A.~Vedaldi.
\newblock Meta-learning with differentiable closed-form solvers.
\newblock \emph{International conference on learning representations}, 2019.

\bibitem[Brown et~al.(2020)Brown, Mann, Ryder, Subbiah, Kaplan, Dhariwal,
  Neelakantan, Shyam, Sastry, Askell, et~al.]{brown2020language}
T.~Brown, B.~Mann, N.~Ryder, M.~Subbiah, J.~D. Kaplan, P.~Dhariwal,
  A.~Neelakantan, P.~Shyam, G.~Sastry, A.~Askell, et~al.
\newblock Language models are few-shot learners.
\newblock \emph{Advances in neural information processing systems},
  33:\penalty0 1877--1901, 2020.

\bibitem[Chen et~al.(2018)Chen, Liu, Kira, Wang, and Huang]{chen2018closer}
W.-Y. Chen, Y.-C. Liu, Z.~Kira, Y.-C.~F. Wang, and J.-B. Huang.
\newblock A closer look at few-shot classification.
\newblock In \emph{International Conference on Learning Representations}, 2018.

\bibitem[Chen et~al.(2021)Chen, Liu, Xu, Darrell, and Wang]{chen2021meta}
Y.~Chen, Z.~Liu, H.~Xu, T.~Darrell, and X.~Wang.
\newblock Meta-baseline: Exploring simple meta-learning for few-shot learning.
\newblock In \emph{Proceedings of the IEEE/CVF International Conference on
  Computer Vision}, pages 9062--9071, 2021.

\bibitem[Cimpoi et~al.(2014)Cimpoi, Maji, Kokkinos, Mohamed, and
  Vedaldi]{cimpoi2014describing}
M.~Cimpoi, S.~Maji, I.~Kokkinos, S.~Mohamed, and A.~Vedaldi.
\newblock Describing textures in the wild.
\newblock In \emph{Proceedings of the IEEE conference on computer vision and
  pattern recognition}, pages 3606--3613, 2014.

\bibitem[Denevi et~al.(2020)Denevi, Pontil, and Ciliberto]{denevi2020advantage}
G.~Denevi, M.~Pontil, and C.~Ciliberto.
\newblock The advantage of conditional meta-learning for biased regularization
  and fine-tuning.
\newblock In \emph{Advances in Neural Information Processing Systems},
  volume~33, pages 964--974, 2020.

\bibitem[Denevi et~al.(2022)Denevi, Pontil, and Ciliberto]{denevi2022}
G.~Denevi, M.~Pontil, and C.~Ciliberto.
\newblock Conditional meta-learning of linear representations.
\newblock In \emph{Advances in Neural Information Processing Systems}, 2022.

\bibitem[Du et~al.(2020)Du, Hu, Kakade, Lee, and Lei]{du2020few}
S.~S. Du, W.~Hu, S.~M. Kakade, J.~D. Lee, and Q.~Lei.
\newblock Few-shot learning via learning the representation, provably.
\newblock In \emph{International Conference on Learning Representations}, 2020.

\bibitem[Dvornik et~al.(2020)Dvornik, Schmid, and Mairal]{dvornik2020selecting}
N.~Dvornik, C.~Schmid, and J.~Mairal.
\newblock Selecting relevant features from a multi-domain representation for
  few-shot classification.
\newblock In \emph{European Conference on Computer Vision}, pages 769--786.
  Springer, 2020.

\bibitem[El~Baz et~al.(2022)El~Baz, Ullah, Alcoba{\c{c}}a, Carvalho, Chen,
  Ferreira, Gouk, Guan, Guyon, Hospedales, et~al.]{el2022lessons}
A.~El~Baz, I.~Ullah, E.~Alcoba{\c{c}}a, A.~C. Carvalho, H.~Chen, F.~Ferreira,
  H.~Gouk, C.~Guan, I.~Guyon, T.~Hospedales, et~al.
\newblock Lessons learned from the neurips 2021 metadl challenge: Backbone
  fine-tuning without episodic meta-learning dominates for few-shot learning
  image classification.
\newblock In \emph{NeurIPS 2021 Competitions and Demonstrations Track}, pages
  80--96. PMLR, 2022.

\bibitem[Fei-Fei et~al.(2006)Fei-Fei, Fergus, and Perona]{fei2006one}
L.~Fei-Fei, R.~Fergus, and P.~Perona.
\newblock One-shot learning of object categories.
\newblock \emph{IEEE transactions on pattern analysis and machine
  intelligence}, 28\penalty0 (4), 2006.

\bibitem[Finn et~al.(2017)Finn, Abbeel, and Levine]{finn2017model}
C.~Finn, P.~Abbeel, and S.~Levine.
\newblock Model-agnostic meta-learning for fast adaptation of deep networks.
\newblock In \emph{Proceedings of the 34th International Conference on Machine
  Learning-Volume 70}. JMLR. org, 2017.

\bibitem[Franceschi et~al.(2018)Franceschi, Frasconi, Salzo, Grazzi, and
  Pontil]{franceschi2018bilevel}
L.~Franceschi, P.~Frasconi, S.~Salzo, R.~Grazzi, and M.~Pontil.
\newblock Bilevel programming for hyperparameter optimization and
  meta-learning.
\newblock In \emph{International Conference on Machine Learning}, pages
  1568--1577, 2018.

\bibitem[Ghiasi et~al.(2018)Ghiasi, Lin, and Le]{ghiasi2018dropblock}
G.~Ghiasi, T.-Y. Lin, and Q.~V. Le.
\newblock Dropblock: A regularization method for convolutional networks.
\newblock \emph{Advances in neural information processing systems}, 31, 2018.

\bibitem[Gidaris et~al.(2018)Gidaris, Singh, and
  Komodakis]{gidaris2018unsupervised}
S.~Gidaris, P.~Singh, and N.~Komodakis.
\newblock Unsupervised representation learning by predicting image rotations.
\newblock \emph{International Conference on Learning Representations}, 2018.

\bibitem[Goldblum et~al.(2020)Goldblum, Reich, Fowl, Ni, Cherepanova, and
  Goldstein]{goldblum2020unraveling}
M.~Goldblum, S.~Reich, L.~Fowl, R.~Ni, V.~Cherepanova, and T.~Goldstein.
\newblock Unraveling meta-learning: Understanding feature representations for
  few-shot tasks.
\newblock In \emph{International Conference on Machine Learning}, pages
  3607--3616. PMLR, 2020.

\bibitem[Golowich et~al.(2018)Golowich, Rakhlin, and Shamir]{golowich2018size}
N.~Golowich, A.~Rakhlin, and O.~Shamir.
\newblock Size-independent sample complexity of neural networks.
\newblock In \emph{Conference On Learning Theory}, pages 297--299. PMLR, 2018.

\bibitem[Goodfellow et~al.(2014)Goodfellow, Pouget-Abadie, Mirza, Xu,
  Warde-Farley, Ozair, Courville, and Bengio]{goodfellow2014generative}
I.~Goodfellow, J.~Pouget-Abadie, M.~Mirza, B.~Xu, D.~Warde-Farley, S.~Ozair,
  A.~Courville, and Y.~Bengio.
\newblock Generative adversarial nets.
\newblock \emph{Advances in neural information processing systems}, 27, 2014.

\bibitem[Ha et~al.(2016)Ha, Dai, and Le]{ha2016hypernetworks}
D.~Ha, A.~Dai, and Q.~V. Le.
\newblock Hypernetworks.
\newblock \emph{arXiv preprint arXiv:1609.09106}, 2016.

\bibitem[Harris et~al.(2020)Harris, Millman, Van Der~Walt, Gommers, Virtanen,
  Cournapeau, Wieser, Taylor, Berg, Smith, et~al.]{harris2020array}
C.~R. Harris, K.~J. Millman, S.~J. Van Der~Walt, R.~Gommers, P.~Virtanen,
  D.~Cournapeau, E.~Wieser, J.~Taylor, S.~Berg, N.~J. Smith, et~al.
\newblock Array programming with numpy.
\newblock \emph{Nature}, 585\penalty0 (7825):\penalty0 357--362, 2020.

\bibitem[He et~al.(2016)He, Zhang, Ren, and Sun]{he2016deep}
K.~He, X.~Zhang, S.~Ren, and J.~Sun.
\newblock Deep residual learning for image recognition.
\newblock In \emph{Proceedings of the IEEE conference on computer vision and
  pattern recognition}, pages 770--778, 2016.

\bibitem[Hospedales et~al.(2020)Hospedales, Antoniou, Micaelli, and
  Storkey]{hospedales2020meta}
T.~Hospedales, A.~Antoniou, P.~Micaelli, and A.~Storkey.
\newblock Meta-learning in neural networks: A survey.
\newblock \emph{arXiv preprint arXiv:2004.05439}, 2020.

\bibitem[Houben et~al.(2013)Houben, Stallkamp, Salmen, Schlipsing, and
  Igel]{houben2013detection}
S.~Houben, J.~Stallkamp, J.~Salmen, M.~Schlipsing, and C.~Igel.
\newblock Detection of traffic signs in real-world images: The german traffic
  sign detection benchmark.
\newblock In \emph{The 2013 international joint conference on neural networks
  (IJCNN)}, pages 1--8. Ieee, 2013.

\bibitem[Hunter(2007)]{hunter2007matplotlib}
J.~D. Hunter.
\newblock Matplotlib: A 2d graphics environment.
\newblock \emph{Computing in science \& engineering}, 9\penalty0 (03):\penalty0
  90--95, 2007.

\bibitem[Jerfel et~al.(2019)Jerfel, Grant, Griffiths, and
  Heller]{jerfel2019reconciling}
G.~Jerfel, E.~Grant, T.~Griffiths, and K.~A. Heller.
\newblock Reconciling meta-learning and continual learning with online mixtures
  of tasks.
\newblock In \emph{Advances in Neural Information Processing Systems}, pages
  9119--9130, 2019.

\bibitem[Jiang et~al.(2018)Jiang, Havaei, Varno, Chartrand, Chapados, and
  Matwin]{jiang2018learning}
X.~Jiang, M.~Havaei, F.~Varno, G.~Chartrand, N.~Chapados, and S.~Matwin.
\newblock Learning to learn with conditional class dependencies.
\newblock In \emph{International Conference on Learning Representations}, 2018.

\bibitem[Jongejan et~al.(2016)Jongejan, Rowley, Kawashima, Kim, and
  Fox-Gieg]{quickdraw}
J.~Jongejan, H.~Rowley, T.~Kawashima, J.~Kim, and N.~Fox-Gieg.
\newblock The quick, draw!-ai experiment (2016).
\newblock \emph{URL http://quickdraw. withgoogle. com}, 4, 2016.

\bibitem[Kolesnikov et~al.(2020)Kolesnikov, Beyer, Zhai, Puigcerver, Yung,
  Gelly, and Houlsby]{kolesnikov2020big}
A.~Kolesnikov, L.~Beyer, X.~Zhai, J.~Puigcerver, J.~Yung, S.~Gelly, and
  N.~Houlsby.
\newblock Big transfer (bit): General visual representation learning.
\newblock In \emph{Computer Vision--ECCV 2020: 16th European Conference,
  Glasgow, UK, August 23--28, 2020, Proceedings, Part V 16}, pages 491--507.
  Springer, 2020.

\bibitem[Krizhevsky et~al.(2012)Krizhevsky, Sutskever, and
  Hinton]{krizhevsky2012imagenet}
A.~Krizhevsky, I.~Sutskever, and G.~E. Hinton.
\newblock Imagenet classification with deep convolutional neural networks.
\newblock \emph{Advances in neural information processing systems}, 25, 2012.

\bibitem[Lake et~al.(2015)Lake, Salakhutdinov, and Tenenbaum]{lake2015human}
B.~M. Lake, R.~Salakhutdinov, and J.~B. Tenenbaum.
\newblock Human-level concept learning through probabilistic program induction.
\newblock \emph{Science}, 350\penalty0 (6266):\penalty0 1332--1338, 2015.

\bibitem[Lee et~al.(2019)Lee, Maji, Ravichandran, and Soatto]{lee2019meta}
K.~Lee, S.~Maji, A.~Ravichandran, and S.~Soatto.
\newblock Meta-learning with differentiable convex optimization.
\newblock In \emph{Proceedings of the IEEE/CVF Conference on Computer Vision
  and Pattern Recognition}, pages 10657--10665, 2019.

\bibitem[Li et~al.(2021)Li, Liu, and Bilen]{li2021universal}
W.-H. Li, X.~Liu, and H.~Bilen.
\newblock Universal representation learning from multiple domains for few-shot
  classification.
\newblock In \emph{Proceedings of the IEEE/CVF International Conference on
  Computer Vision}, pages 9526--9535, 2021.

\bibitem[Li et~al.(2022)Li, Liu, and Bilen]{li2022cross}
W.-H. Li, X.~Liu, and H.~Bilen.
\newblock Cross-domain few-shot learning with task-specific adapters.
\newblock In \emph{Proceedings of the IEEE/CVF Conference on Computer Vision
  and Pattern Recognition}, pages 7161--7170, 2022.

\bibitem[Li et~al.(2017)Li, Zhou, Chen, and Li]{li2017meta}
Z.~Li, F.~Zhou, F.~Chen, and H.~Li.
\newblock Meta-sgd: Learning to learn quickly for few-shot learning.
\newblock \emph{arXiv preprint arXiv:1707.09835}, 2017.

\bibitem[Lin et~al.(2014)Lin, Maire, Belongie, Hays, Perona, Ramanan,
  Doll{\'a}r, and Zitnick]{lin2014microsoft}
T.-Y. Lin, M.~Maire, S.~Belongie, J.~Hays, P.~Perona, D.~Ramanan,
  P.~Doll{\'a}r, and C.~L. Zitnick.
\newblock Microsoft coco: Common objects in context.
\newblock In \emph{European conference on computer vision}, pages 740--755.
  Springer, 2014.

\bibitem[Liu et~al.(2020)Liu, Hamilton, Long, Jiang, and
  Larochelle]{liu2020universal}
L.~Liu, W.~Hamilton, G.~Long, J.~Jiang, and H.~Larochelle.
\newblock A universal representation transformer layer for few-shot image
  classification.
\newblock \emph{arXiv preprint arXiv:2006.11702}, 2020.

\bibitem[Lloyd(1982)]{lloyd1982least}
S.~Lloyd.
\newblock Least squares quantization in pcm.
\newblock \emph{IEEE transactions on information theory}, 28\penalty0
  (2):\penalty0 129--137, 1982.

\bibitem[Loshchilov and Hutter(2017)]{loshchilov2017decoupled}
I.~Loshchilov and F.~Hutter.
\newblock Decoupled weight decay regularization.
\newblock \emph{arXiv preprint arXiv:1711.05101}, 2017.

\bibitem[Maji et~al.(2013)Maji, Rahtu, Kannala, Blaschko, and
  Vedaldi]{maji13fine-grained}
S.~Maji, E.~Rahtu, J.~Kannala, M.~B. Blaschko, and A.~Vedaldi.
\newblock Fine-grained visual classification of aircraft.
\newblock \emph{CoRR}, abs/1306.5151, 2013.
\newblock URL \url{http://arxiv.org/abs/1306.5151}.

\bibitem[Mangla et~al.(2020)Mangla, Kumari, Sinha, Singh, Krishnamurthy, and
  Balasubramanian]{mangla2020charting}
P.~Mangla, N.~Kumari, A.~Sinha, M.~Singh, B.~Krishnamurthy, and V.~N.
  Balasubramanian.
\newblock Charting the right manifold: Manifold mixup for few-shot learning.
\newblock In \emph{Proceedings of the IEEE/CVF Winter Conference on
  Applications of Computer Vision}, pages 2218--2227, 2020.

\bibitem[McInnes et~al.(2018)McInnes, Healy, and Melville]{mcinnes2018umap}
L.~McInnes, J.~Healy, and J.~Melville.
\newblock Umap: Uniform manifold approximation and projection for dimension
  reduction.
\newblock \emph{arXiv preprint arXiv:1802.03426}, 2018.

\bibitem[Mnih et~al.(2015)Mnih, Kavukcuoglu, Silver, Rusu, Veness, Bellemare,
  Graves, Riedmiller, Fidjeland, Ostrovski, et~al.]{mnih2015human}
V.~Mnih, K.~Kavukcuoglu, D.~Silver, A.~A. Rusu, J.~Veness, M.~G. Bellemare,
  A.~Graves, M.~Riedmiller, A.~K. Fidjeland, G.~Ostrovski, et~al.
\newblock Human-level control through deep reinforcement learning.
\newblock \emph{nature}, 518\penalty0 (7540):\penalty0 529--533, 2015.

\bibitem[Nichol et~al.(2018)Nichol, Achiam, and Schulman]{nichol2018first}
A.~Nichol, J.~Achiam, and J.~Schulman.
\newblock On first-order meta-learning algorithms.
\newblock \emph{arXiv preprint arXiv:1803.02999}, 2018.

\bibitem[Nilsback and Zisserman(2008)]{nilsback2008automated}
M.-E. Nilsback and A.~Zisserman.
\newblock Automated flower classification over a large number of classes.
\newblock In \emph{2008 Sixth Indian Conference on Computer Vision, Graphics \&
  Image Processing}, pages 722--729. IEEE, 2008.

\bibitem[Oreshkin et~al.(2018)Oreshkin, L{\'o}pez, and
  Lacoste]{oreshkin2018tadam}
B.~Oreshkin, P.~R. L{\'o}pez, and A.~Lacoste.
\newblock Tadam: Task dependent adaptive metric for improved few-shot learning.
\newblock In \emph{Advances in Neural Information Processing Systems}, pages
  721--731, 2018.

\bibitem[Paszke et~al.(2017)Paszke, Gross, Chintala, Chanan, Yang, DeVito, Lin,
  Desmaison, Antiga, and Lerer]{paszke2017automatic}
A.~Paszke, S.~Gross, S.~Chintala, G.~Chanan, E.~Yang, Z.~DeVito, Z.~Lin,
  A.~Desmaison, L.~Antiga, and A.~Lerer.
\newblock Automatic differentiation in pytorch.
\newblock \emph{Advances in Neural Information Processing Systems}, 2017.

\bibitem[Pedregosa et~al.(2011)Pedregosa, Varoquaux, Gramfort, Michel, Thirion,
  Grisel, Blondel, Prettenhofer, Weiss, Dubourg, et~al.]{pedregosa2011scikit}
F.~Pedregosa, G.~Varoquaux, A.~Gramfort, V.~Michel, B.~Thirion, O.~Grisel,
  M.~Blondel, P.~Prettenhofer, R.~Weiss, V.~Dubourg, et~al.
\newblock Scikit-learn: Machine learning in python.
\newblock \emph{the Journal of machine Learning research}, 12:\penalty0
  2825--2830, 2011.

\bibitem[Qi et~al.(2018)Qi, Brown, and Lowe]{qi2018low}
H.~Qi, M.~Brown, and D.~G. Lowe.
\newblock Low-shot learning with imprinted weights.
\newblock In \emph{Proceedings of the IEEE conference on computer vision and
  pattern recognition}, pages 5822--5830, 2018.

\bibitem[Radford et~al.(2021)Radford, Kim, Hallacy, Ramesh, Goh, Agarwal,
  Sastry, Askell, Mishkin, Clark, et~al.]{radford2021learning}
A.~Radford, J.~W. Kim, C.~Hallacy, A.~Ramesh, G.~Goh, S.~Agarwal, G.~Sastry,
  A.~Askell, P.~Mishkin, J.~Clark, et~al.
\newblock Learning transferable visual models from natural language
  supervision.
\newblock In \emph{International conference on machine learning}, pages
  8748--8763. PMLR, 2021.

\bibitem[Raghu et~al.(2019)Raghu, Raghu, Bengio, and Vinyals]{raghu2019rapid}
A.~Raghu, M.~Raghu, S.~Bengio, and O.~Vinyals.
\newblock Rapid learning or feature reuse? towards understanding the
  effectiveness of maml.
\newblock \emph{arXiv preprint arXiv:1909.09157}, 2019.

\bibitem[Ravichandran et~al.(2019)Ravichandran, Bhotika, and
  Soatto]{ravichandran2019few}
A.~Ravichandran, R.~Bhotika, and S.~Soatto.
\newblock Few-shot learning with embedded class models and shot-free meta
  training.
\newblock In \emph{Proceedings of the IEEE/CVF International Conference on
  Computer Vision}, pages 331--339, 2019.

\bibitem[Ren et~al.(2018)Ren, Triantafillou, Ravi, Snell, Swersky, Tenenbaum,
  Larochelle, and Zemel]{ren2018meta}
M.~Ren, E.~Triantafillou, S.~Ravi, J.~Snell, K.~Swersky, J.~B. Tenenbaum,
  H.~Larochelle, and R.~S. Zemel.
\newblock Meta-learning for semi-supervised few-shot classification.
\newblock \emph{International conference on learning representations}, 2018.

\bibitem[Requeima et~al.(2019)Requeima, Gordon, Bronskill, Nowozin, and
  Turner]{requeima2019fast}
J.~Requeima, J.~Gordon, J.~Bronskill, S.~Nowozin, and R.~E. Turner.
\newblock Fast and flexible multi-task classification using conditional neural
  adaptive processes.
\newblock \emph{Advances in Neural Information Processing Systems}, 32, 2019.

\bibitem[Ridnik et~al.(2021)Ridnik, Ben-Baruch, Noy, and
  Zelnik-Manor]{ridnik2021imagenet}
T.~Ridnik, E.~Ben-Baruch, A.~Noy, and L.~Zelnik-Manor.
\newblock Imagenet-21k pretraining for the masses.
\newblock \emph{arXiv preprint arXiv:2104.10972}, 2021.

\bibitem[Rodr{\'\i}guez et~al.(2020)Rodr{\'\i}guez, Laradji, Drouin, and
  Lacoste]{rodriguez2020embedding}
P.~Rodr{\'\i}guez, I.~Laradji, A.~Drouin, and A.~Lacoste.
\newblock Embedding propagation: Smoother manifold for few-shot classification.
\newblock In \emph{European Conference on Computer Vision}, pages 121--138.
  Springer, 2020.

\bibitem[Russakovsky et~al.(2015)Russakovsky, Deng, Su, Krause, Satheesh, Ma,
  Huang, Karpathy, Khosla, Bernstein, et~al.]{russakovsky2015imagenet}
O.~Russakovsky, J.~Deng, H.~Su, J.~Krause, S.~Satheesh, S.~Ma, Z.~Huang,
  A.~Karpathy, A.~Khosla, M.~Bernstein, et~al.
\newblock Imagenet large scale visual recognition challenge.
\newblock \emph{International journal of computer vision}, 115\penalty0
  (3):\penalty0 211--252, 2015.

\bibitem[Rusu et~al.(2019)Rusu, Rao, Sygnowski, Vinyals, Pascanu, Osindero, and
  Hadsell]{rusu2018meta}
A.~A. Rusu, D.~Rao, J.~Sygnowski, O.~Vinyals, R.~Pascanu, S.~Osindero, and
  R.~Hadsell.
\newblock Meta-learning with latent embedding optimization.
\newblock \emph{International conference on learning representations}, 2019.

\bibitem[Saikia et~al.(2020)Saikia, Brox, and Schmid]{saikia2020optimized}
T.~Saikia, T.~Brox, and C.~Schmid.
\newblock Optimized generic feature learning for few-shot classification across
  domains.
\newblock \emph{arXiv preprint arXiv:2001.07926}, 2020.

\bibitem[Schroeder and Cui(2018)]{fungi}
B.~Schroeder and Y.~Cui.
\newblock Fgvcx fungi classification challenge, 2018.

\bibitem[Shalev-Shwartz and Ben-David(2014)]{shalev2014understanding}
S.~Shalev-Shwartz and S.~Ben-David.
\newblock \emph{Understanding machine learning: From theory to algorithms}.
\newblock Cambridge university press, 2014.

\bibitem[Shysheya et~al.(2022)Shysheya, Bronskill, Patacchiola, Nowozin, and
  Turner]{shysheya2022fit}
A.~Shysheya, J.~Bronskill, M.~Patacchiola, S.~Nowozin, and R.~E. Turner.
\newblock Fit: Parameter efficient few-shot transfer learning for personalized
  and federated image classification.
\newblock \emph{arXiv preprint arXiv:2206.08671}, 2022.

\bibitem[Silver et~al.(2017)Silver, Schrittwieser, Simonyan, Antonoglou, Huang,
  Guez, Hubert, Baker, Lai, Bolton, et~al.]{silver2017mastering}
D.~Silver, J.~Schrittwieser, K.~Simonyan, I.~Antonoglou, A.~Huang, A.~Guez,
  T.~Hubert, L.~Baker, M.~Lai, A.~Bolton, et~al.
\newblock Mastering the game of go without human knowledge.
\newblock \emph{nature}, 550\penalty0 (7676):\penalty0 354--359, 2017.

\bibitem[Snell et~al.(2017)Snell, Swersky, and Zemel]{snell2017prototypical}
J.~Snell, K.~Swersky, and R.~Zemel.
\newblock Prototypical networks for few-shot learning.
\newblock In \emph{Advances in Neural Information Processing Systems}, pages
  4077--4087, 2017.

\bibitem[Sun et~al.(2019)Sun, Liu, Chua, and Schiele]{sun2019meta}
Q.~Sun, Y.~Liu, T.-S. Chua, and B.~Schiele.
\newblock Meta-transfer learning for few-shot learning.
\newblock In \emph{Proceedings of the IEEE/CVF Conference on Computer Vision
  and Pattern Recognition}, pages 403--412, 2019.

\bibitem[Sung et~al.(2018)Sung, Yang, Zhang, Xiang, Torr, and
  Hospedales]{sung2018learning}
F.~Sung, Y.~Yang, L.~Zhang, T.~Xiang, P.~H. Torr, and T.~M. Hospedales.
\newblock Learning to compare: Relation network for few-shot learning.
\newblock In \emph{Proceedings of the IEEE conference on computer vision and
  pattern recognition}, pages 1199--1208, 2018.

\bibitem[Tian et~al.(2020)Tian, Wang, Krishnan, Tenenbaum, and
  Isola]{tian2020rethinking}
Y.~Tian, Y.~Wang, D.~Krishnan, J.~B. Tenenbaum, and P.~Isola.
\newblock Rethinking few-shot image classification: a good embedding is all you
  need?
\newblock \emph{European Conference on Computer Vision}, 2020.

\bibitem[Triantafillou et~al.(2019)Triantafillou, Zhu, Dumoulin, Lamblin, Evci,
  Xu, Goroshin, Gelada, Swersky, Manzagol, and
  Larochelle]{triantafillou2019meta}
E.~Triantafillou, T.~Zhu, V.~Dumoulin, P.~Lamblin, U.~Evci, K.~Xu, R.~Goroshin,
  C.~Gelada, K.~Swersky, P.-A. Manzagol, and H.~Larochelle.
\newblock Meta-dataset: A dataset of datasets for learning to learn from few
  examples.
\newblock In \emph{International Conference on Learning Representations}, 2019.

\bibitem[Triantafillou et~al.(2021)Triantafillou, Larochelle, Zemel, and
  Dumoulin]{triantafillou2021learning}
E.~Triantafillou, H.~Larochelle, R.~Zemel, and V.~Dumoulin.
\newblock Learning a universal template for few-shot dataset generalization.
\newblock In \emph{International Conference on Machine Learning}, pages
  10424--10433. PMLR, 2021.

\bibitem[Vanschoren(2019)]{vanschoren2019meta}
J.~Vanschoren.
\newblock Meta-learning.
\newblock In \emph{Automated Machine Learning}, pages 35--61. Springer, Cham,
  2019.

\bibitem[Vinyals et~al.(2016)Vinyals, Blundell, Lillicrap, Wierstra,
  et~al.]{vinyals2016matching}
O.~Vinyals, C.~Blundell, T.~Lillicrap, D.~Wierstra, et~al.
\newblock Matching networks for one shot learning.
\newblock In \emph{Advances in neural information processing systems}, 2016.

\bibitem[Vuorio et~al.(2019)Vuorio, Sun, Hu, and Lim]{vuorio2019multimodal}
R.~Vuorio, S.-H. Sun, H.~Hu, and J.~J. Lim.
\newblock Multimodal model-agnostic meta-learning via task-aware modulation.
\newblock In \emph{Advances in Neural Information Processing Systems}, pages
  1--12, 2019.

\bibitem[Wang et~al.(2020)Wang, Demiris, and Ciliberto]{wang2020structured}
R.~Wang, Y.~Demiris, and C.~Ciliberto.
\newblock Structured prediction for conditional meta-learning.
\newblock \emph{Advances in Neural Information Processing Systems}, 2020.

\bibitem[Wang et~al.(2021)Wang, Pontil, and Ciliberto]{wang2021role}
R.~Wang, M.~Pontil, and C.~Ciliberto.
\newblock The role of global labels in few-shot classification and how to infer
  them.
\newblock \emph{Advances in Neural Information Processing Systems},
  34:\penalty0 27160--27170, 2021.

\bibitem[Welinder et~al.(2010)Welinder, Branson, Mita, Wah, Schroff, Belongie,
  and Perona]{cub200}
P.~Welinder, S.~Branson, T.~Mita, C.~Wah, F.~Schroff, S.~Belongie, and
  P.~Perona.
\newblock Caltech-ucsd birds 200.
\newblock Technical report, California Institute of Technology, 2010.

\bibitem[Wertheimer et~al.(2021)Wertheimer, Tang, and
  Hariharan]{wertheimer2021few}
D.~Wertheimer, L.~Tang, and B.~Hariharan.
\newblock Few-shot classification with feature map reconstruction networks.
\newblock In \emph{Proceedings of the IEEE/CVF Conference on Computer Vision
  and Pattern Recognition}, pages 8012--8021, 2021.

\bibitem[Xu and Le(2022)]{xu2022generating}
J.~Xu and H.~Le.
\newblock Generating representative samples for few-shot classification.
\newblock In \emph{Proceedings of the IEEE/CVF Conference on Computer Vision
  and Pattern Recognition}, pages 9003--9013, 2022.

\bibitem[Yang et~al.(2021)Yang, Liu, and Xu]{yang2021free}
S.~Yang, L.~Liu, and M.~Xu.
\newblock Free lunch for few-shot learning: Distribution calibration.
\newblock \emph{arXiv preprint arXiv:2101.06395}, 2021.

\bibitem[Yao et~al.(2019)Yao, Wei, Huang, and Li]{yao2019hierarchically}
H.~Yao, Y.~Wei, J.~Huang, and Z.~Li.
\newblock Hierarchically structured meta-learning.
\newblock In \emph{International Conference on Machine Learning}, pages
  7045--7054, 2019.

\bibitem[Ye et~al.(2020)Ye, Hu, Zhan, and Sha]{ye2020few}
H.-J. Ye, H.~Hu, D.-C. Zhan, and F.~Sha.
\newblock Few-shot learning via embedding adaptation with set-to-set functions.
\newblock In \emph{Proceedings of the IEEE/CVF Conference on Computer Vision
  and Pattern Recognition}, pages 8808--8817, 2020.

\bibitem[Zhang et~al.(2020)Zhang, Cai, Lin, and Shen]{zhang2020deepemd}
C.~Zhang, Y.~Cai, G.~Lin, and C.~Shen.
\newblock Deepemd: Few-shot image classification with differentiable earth
  mover's distance and structured classifiers.
\newblock In \emph{Proceedings of the IEEE/CVF conference on computer vision
  and pattern recognition}, pages 12203--12213, 2020.

\end{thebibliography}

\begin{IEEEbiography}[{\includegraphics[width=1in,height=1.25in,clip,keepaspectratio]{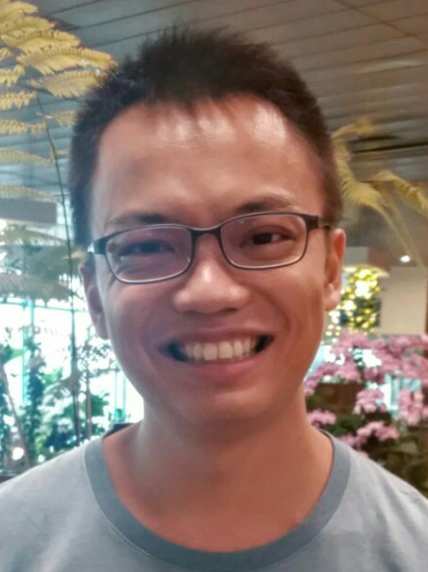}}]{Ruohan Wang}

is a research scientist at Institute for Infocomm Research, A*STAR Singapore. Previously, he was a postdoctoral researcher at UCL’s Intelligent Systems Group, supervised by Prof. Massimiliano Pontil. He completed his Ph.D. in machine learning at Imperial College under the supervision of Prof. Yiannis Demiris and Prof. Carlo Ciliberto, funded by Singapore National Science Scholarship. He has a broad interests in topics of machine learning, including representation learning, meta-learning, and imitation learning. His research goal is to design robust ML systems that could efficiently leverage past experiences and existing knowledge for future learning.
\end{IEEEbiography}
\vskip -2.4\baselineskip plus -1fil
\begin{IEEEbiography}[{\includegraphics[width=1in,height=1.25in,clip,keepaspectratio]{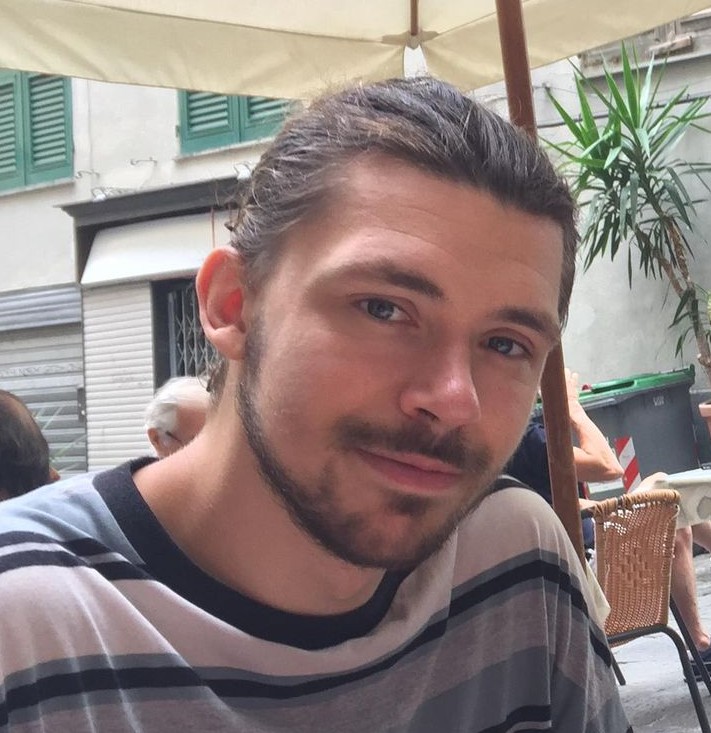}}]{John Isak Texas Falk}
is a Ph.D student at the Dept. of Computer Science of UCL where he is supervised by Prof. Massimiliano Pontil and Prof. Carlo Ciliberto. He is also a Research Fellow at Italian Institute of Technology CSML under the supervision of Prof. Massimiliano Pontil. His Ph.D is focused on few-shot and meta-learning in the context of kernel learning and statistical learning theory in the context of meta-learning, although he has an interest in representation learning, fairness and bandits. The goal of his research is how to design robust meta-learning systems that work well under real-world conditions and come with guarantees.
\end{IEEEbiography}
\vskip -2.4\baselineskip plus -1fil
\begin{IEEEbiography}[{\includegraphics[width=1in,height=1.25in,clip,keepaspectratio]{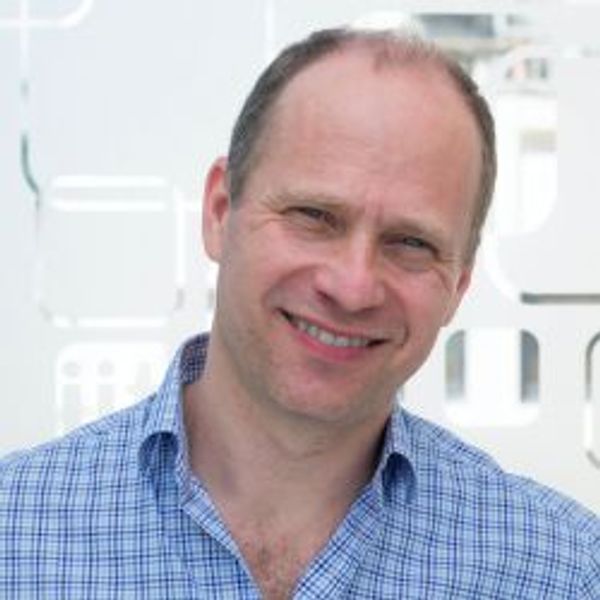}}]{Massimiliano Pontil} is Senior Researcher at the Italian Institute of Technology (IIT), where he leads the Computational Statistics and Machine Learning group, and co-director of the ELLIS Unit Genoa, a joint effort of IIT and the University of Genoa. He is also part-time professor at University College London and member of the UCL Centre for Artificial Intelligence. He obtained his PhD in Physics from the University of Genoa in 1999. He has made significant contributions to machine learning, particularly in the areas of kernel methods, multitask and transfer learning, sparsity regularization and statistical learning theory. He has published about 150 papers at international journals and conferences, is regularly on the programme committee of the main machine learning conferences, and has been on the editorial board of the Machine Learning Journal, Statistics and Computing, and JMLR. He has held visiting positions at a number of universities and research institutes, including the Massachusetts Institute of Technology, the Isaac Newton Institute for Mathematical Sciences in Cambridge, the City University of Hong Kong, the University Carlos III de Madrid, ENSAE Institute Polytechnique Paris, and Ecole Polytechnique.
\end{IEEEbiography}
\vskip -2.4\baselineskip plus -1fil
\begin{IEEEbiography}[{\includegraphics[width=1in,height=1.25in,clip,keepaspectratio]{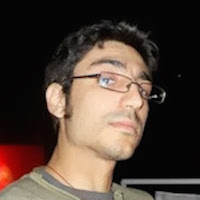}}]{Carlo Ciliberto}
is Associate Professor with the Centre for Artificial Intelligence at University College London, He is member of the ELLIS society and of the ELLIS Unit based at UCL. He obtained his bachelor and master degrees in Mathematics at the Università Roma Tre (Magna Cum Laude) and a PhD in machine learning applied to robotics and computer vision at the Istituto Italiano di Tecnologia. He has been Postdoctoral Researcher at the Massachusetts Institute of Technology with the Center for Brain Minds and Machines and became Lecturer (Assistant Professor) at Imperial College London before joining UCL, where he now carries out his main research activity. Carlo's research interests focus on foundational aspects of machine learning within the framework of statistical learning theory. He is particularly interested in the role of “structure” (being it in the form of prior knowledge or structural constraints) in reducing the sample complexity of learning algorithms with the goal of making them more sustainable both computationally and financially. He investigated these questions within the settings of structured prediction, multi-task and meta-learning, with applications to computer vision, robotics and recommendation systems.
\end{IEEEbiography}

\onecolumn

\section{Proof for Theorem 1}

We first restate the notations and assumptions necessary to prove \cref{thm:gls-and-pretraining}. From \cref{asm:task-distribution}, we define the joint distribution $\pi_\mu(x,y)$ as the probability of observing an input-output pair $(x,y)$ when first sampling a task $\rho$ from $\mu$ and then sampling $(x,y)$ according to the ``query'' distribution $\pi_\rho$ in \cref{eq:sampling-query-set}. In other words, $\pi_\mu(x, y) = \pi(x|y)\unif_\rho(y)\mu(\rho)$ is the marginal distribution of $(x, y)$ with respect to $\rho$.

\noindent We observe that for any $g:\X\times\Y\to\R$, 
\eqal{\label{eq:marginalizing-out-rho}
    \EE_{(x,y)\sim\pi_\mu}~g(x,y) = \EE_{\rho\sim\mu}\EE_{(x,y)\sim\pi_\rho}~g(x,y).
}
Given \cref{asm:task-distribution}, $S$ and $Q$ are sampled independently by the task $\rho$. In particular the marginal $\rho_Q$ of $\rho$ with respect to $Q$ corresponds to $\pi_\rho$. Similarly, we denote $\rho_S$ the distribution over support sets obtained by marginalizing out the query set. We report one remark following the assumption above.

\begin{remark}\label{rem:metarisk-and-inner-risk}
For any task $\rho$ and any algorithm $D\mapsto f_D$ returning functions $f_S:\X\to\R^k$, we have
\eqals{
    \EE_{(S,Q)\sim\rho}~\Lagr(f_S,Q) & = \EE_{S\sim\rho_S}\EE_{Q\sim\pi_\rho^m}~\Lagr(f_S,Q)\\
    & = \EE_{S\sim\rho_S}\EE_{Q\sim\pi_\rho^m} \frac{1}{m} \sum_{(x,y)\in Q} \lagr(f_S(x),y)\\
    & = \EE_{S\sim\rho_S}\EE_{(x,y)\sim\pi_\rho} ~\lagr(f_S(x),y)\\
    & = \EE_{S\sim\rho_S}~\Lagr(f_S,\pi_\rho),
}
where for any $f:\X\to\R^k$ we have denoted by 
\eqals{
    \Lagr(f,\pi_\rho) = \EE_{(x,y)\sim\pi_\rho} ~\lagr(f_S(x),y),
}
the expected risk of a function $f:\X\to\R^k$ with respect to the loss $\lagr$ and the distribution $\pi_\rho$. 
\end{remark}
\noindent Using the above remark, we shows how the \gls{} risk for the meta-distribution $\mu$ is related to multi-class classification risk for the corresponding multi-class distribution $\pi_\mu$.
\glsandpretraining*
\begin{proof}
We start by studying the \gls{} risk for a pair $(W,\theta)$ of meta-parameters. By expanding the \gls{} objective explicitly with \cref{rem:metarisk-and-inner-risk}, we have
\eqals{
    \E_{\gls}(W,\theta,\mu) & = \EE_{\rho\sim\mu}~\Lce(\gls(W,\theta,\rho),\pi_\rho)\\
    & = \EE_{\rho\sim\mu}~\EE_{(x,y)\sim\pi_\rho}~\lce(\gls(W,\theta,\rho)(x),y)\\
    & = \EE_{\rho\sim\mu}~\EE_{(x,y)\sim\pi_\rho}~\lce(W[\rho_Y]\psi_\theta(x),y).
}
Since $\rho_Y$ is a subset of $\{1,\dots,C\}$, by definition of cross-entropy we have
\eqals{
    \lce(W[\rho_Y]\psi_\theta(x),y) & = - \log \frac{\exp(W[y]\psi_\theta(x))}{\sum_{y'\in\rho_Y}\exp(W[y']\psi_\theta(x))}\\
    & \leq - \log \frac{\exp(W[y]\psi_\theta(x))}{\sum_{y'\in\{1,\dots,C\}}\exp(W[y']\psi_\theta(x))}\\
    & = \lce(W\psi_\theta(x),y).
}
We can now apply \cref{eq:marginalizing-out-rho} where we take $g(x,y) = \lce(W\psi_\theta(x),y)$. Then,
\eqals{
    \EE_{\rho\sim\mu}~\EE_{(x,y)\sim\pi_\rho} ~\lce(W\psi_\theta(x),y) = \EE_{(x,y)\sim\pi_\mu} ~\lce(W\psi_\theta(x),y) = \Lce(W,\theta,\pi_\mu),
}
which concludes the proof for \cref{eq:gls-and-pretraining}.

\vspace{1em}
\noindent Now, if the global classes are linearly separable, we have
\eqal{
\label{eq:pretrain_sep}
     \min_{W,\metapar}~ \Lagr(W\embd,\pi_\mu) = 0.}
Since $\ell_{ce}(\cdot)$ is non-negative, combining \cref{eq:pretrain_sep} and \cref{eq:gls-and-pretraining} yields
\eqal{
    0 \le \min_{W,\metapar}~\E_{\rm GLS}(W,\metapar) \le  \min_{W,\metapar}~ \Lagr(W\embd,\pi_\mu)= 0}
from which \eqref{eq:gls-and-pretraining-min} follows.
\end{proof}

\section{Comparison between the Lipschitz Constants of $\E_\gls$ and $\Lagr$}\label{sec:app-lipschitz}

In this section, we compare the Lipschitz constants associated to the meta-GLS risk and the pre-training objective function discussed in \cref{sec:generalization}, showing that $L_{\rm pre}$ is comparable or smaller than $L_{\gls}$.  

We recall that the Lipschitz constant of an objective function of the form $\mathbb{E}_{\xi\sim\rho}\ell(\cdot,\xi)$ is $\sup_{\xi} L_\xi$, where $L_\xi$ is the Lipschitz constant of $\ell(\cdot,\xi)$ for any $\xi$ in the support of the probability $\rho$, denoted as $\textrm{supp}\rho$. Furthermore, we recall that if $\ell(\cdot,\xi)$ is smooth on its domain $\Omega$ then
the Lipschitz constant of $\mathbb{E}_{\xi\sim\rho}\ell(\cdot,\xi)$ can be characterized as
\eqals{
    \sup_{\substack{\omega\in\Omega \\ \xi\in\textrm{supp}\rho}} \nor{\nabla_\omega\ell(\omega,\xi)}, 
}

In the following we will assume $\Omega = B_{\lambda}^{2,1}(0) \times \Theta$, where $B_\lambda(0)$ is a ball of radius $\lambda$ with respect to the $2,1$--norm centred at $0$, namely that $\nor{W}_{2,1}\leq\lambda$. The $2,1$--norm is defined as $\nor{W}_{2,1} = \sum_{z\leq C} \nor{W_z}_2$ and helps to simplify the following analysis. $\Theta$ is the space of parameters for $g_\theta$ and we assume the representation model to be normalized, namely $\nor{g_\theta(x)}=1$ for all $\theta\in\Theta$ and $x\in\X$. The normalized representational model could be easily generalized to any bounded representation, namely $\sup_{\theta,x}\nor{g_\theta(x)}<+\infty$.

We now compare the Lipschitz constants of $\E_\gls$ in \cref{eq:expected-gls} and the pre-training risk \cref{eq:risk-multiclass}. The key difference between the two objective functions is that meta-GLS applies the cross-entropy loss over subsets of $\{1,\dots,C\}$ classes at each time (the classes appearing in a given task), while global multi-class classification applies it to the entire set of $C$ classes. Therefore, to highlight the dependency on the number of classes in the following we denote $\lce^k$ the cross entropy evaluated among $k$ classes for $k$ an integer. Then, under the notation introduced in this section and in \cref{sec:generalization}, we need to compare 
\eqals{
L_\gls = \sup_{\substack{(W,\theta)\in\Omega \\ (S,Q)\in\textrm{supp}\tilde\mu}}~ \nor{\nabla_{W,\theta}~
\mathcal{L}
(W[S_Y]g_\theta(\cdot),Q)} = \sup_{\substack{(W,\theta)\in\Omega \\ (S,Q)\in\textrm{supp}\tilde\mu \\ (x,y)\in Q} }~\nor{\nabla_{W,\theta}~\lce^k(W[S_Y]g_\theta(x),y)},
}
with the Lipschitz constant of the global multi-class classifier loss, corresponding to
\eqals{
L_{\rm pre} = \sup_{\substack{(W,\theta)\in\Omega \\ (x,y)\in\textrm{supp}\pi_\mu}}~ \nor{\nabla_{W,\theta}~\lce^C(W g_\theta(x),y)},
}
where $\tilde\mu$ denotes the probability of sampling a pair $(S,Q)$ by first sampling $\rho\sim\mu$ and then $(S,Q)\sim\rho$.

We first observe that if $\textrm{supp}\pi_\mu = \textrm{supp}\tilde\mu$ then $L_\gls = L_{\rm pre}$. This happens {\itshape if the few-shot learning distribution can sample datasets containing at least one example per class}. We note however that the two quantities are in general very close to each other when the FSL datasets contain at most $k < C $ classes each, as shown in the result below. 

\begin{theorem}
With the notation and assumptions introduced above
\eqals{
    L_{\rm pre} \leq L_\gls +
    O(e^{-\lambda})
}
\end{theorem}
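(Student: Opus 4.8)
The plan is to bound $L_{\rm pre}$ by comparing, at each admissible configuration, the gradient of the $C$-class cross-entropy with that of a carefully chosen $k$-class cross-entropy, and to control the gap by the softmax mass that the $k$-class restriction discards. Recall that $L_{\rm pre} = \sup_{(W,\theta)\in\Omega,\,(x,y)\in\mathrm{supp}\,\pi_\mu}\|\nabla_{W,\theta}\,\lce^C(Wg_\theta(x),y)\|$ and similarly $L_\gls$ is a supremum of $\|\nabla_{W,\theta}\,\lce^k(W[S_Y]g_\theta(x),y)\|$ over $(W,\theta)\in\Omega$, over tasks $(S,Q)$ in the support of $\tilde\mu$, and over $(x,y)\in Q$. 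Fix any $(W,\theta)\in\Omega$, input $x$, and label $y$ occurring in some task; I would pick a comparison task $(S,Q)$ with $(x,y)\in Q$ whose $k$ classes $S_Y$ consist of $\{y\}$ together with the $k-1$ other labels with largest logits $W[z]g_\theta(x)$. Since such a task is admissible and $(x,y)\in Q$, the $k$-class gradient evaluated there is at most $L_\gls$, so it remains to estimate the difference between the two gradients at this point.

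Writing $l_z = W[z]g_\theta(x)$, $p^{(C)}_z$ and $p^{(k)}_z$ for the $C$- and $k$-class softmax weights, and $\delta = \sum_{j\notin S_Y}p^{(C)}_j$ for the discarded mass, an explicit computation shows the gradient difference consists of: the $W$-rows of the discarded classes, each equal to $p^{(C)}_j\,g_\theta(x)^\top$; the rescaling of the retained rows, $(p^{(C)}_j - p^{(k)}_j)\,g_\theta(x)^\top = -\delta\,p^{(k)}_j\,g_\theta(x)^\top$ (using $p^{(C)}_j = (1-\delta)\,p^{(k)}_j$); and the corresponding correction to the $\theta$-gradient, $(\nabla_\theta g_\theta(x))^\top\big(\sum_{j\notin S_Y}p^{(C)}_j W[j] - \delta\sum_{j\in S_Y}p^{(k)}_j W[j]\big)$, in which the two sums partially cancel in near-symmetric regimes. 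Using $\|g_\theta(x)\|=1$, $\sum_j p^{(C)}_j=1$, $\|W[j]\|_2\le\lambda$ and the smoothness of $g_\theta$ over $\Theta$ (which bounds $\|\nabla_\theta g_\theta\|$), each piece is at most a fixed polynomial in $\lambda$ times $\delta$, so the gradient discrepancy at this configuration is $O(\mathrm{poly}(\lambda)\cdot\delta)$.

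The crux is to show that this bound, taken as a supremum over $\Omega$, is $O(e^{-\lambda})$. Here the $(2,1)$-constraint is essential: since $\sum_z|l_z|\le\sum_z\|W[z]\|_2\le\lambda$, the logits cannot all be large, so a configuration whose $C$-class gradient is close to $L_{\rm pre}$ is forced to be ``peaked'' — one logit dominates (this is exactly how the supremum defining $L_{\rm pre}$ is approached, for both the $W$- and the $\theta$-components of the gradient). For such a configuration the dominating class $z^\star$ can be taken to lie in $S_Y$, and then $\delta\le 1-p^{(C)}_{z^\star}\le\sum_{z\ne z^\star}e^{l_z - l_{z^\star}}$, which, once the $(2,1)$-budget is concentrated on $z^\star$, is $O(C\,e^{-\lambda})$; multiplying by the polynomial prefactor still leaves $O(e^{-\lambda})$ up to an absolute constant in the exponent. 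Configurations far from peaked have $C$-class gradient bounded away from $L_{\rm pre}$ and therefore do not affect the supremum, so combining the two cases gives $\|\nabla_{W,\theta}\,\lce^C(\cdot)\|\le L_\gls + O(e^{-\lambda})$ along every near-maximizing configuration; taking the supremum yields $L_{\rm pre}\le L_\gls + O(e^{-\lambda})$. The main obstacle is making the implication ``near-maximizer $\Rightarrow$ peaked $\Rightarrow$ exponentially small $\delta$'' quantitative while simultaneously tracking the $\theta$-gradient term, whose naive bound grows linearly in $\lambda$ and must be tamed by the cancellation noted above.
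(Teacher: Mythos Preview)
Your approach --- bounding the pointwise gap between the $C$-class and $k$-class gradients by the discarded softmax mass $\delta$, then arguing that near-maximizers are ``peaked'' so $\delta$ is exponentially small --- is genuinely different from the paper's, but the step you yourself flag as ``the main obstacle'' is a real gap, not a detail. Your case split is: (i) at peaked configurations, $\|\nabla\lce^C\|\le \|\nabla\lce^k\| + O(\mathrm{poly}(\lambda)\,\delta)\le L_\gls + O(\mathrm{poly}(\lambda)\,e^{-\lambda})$; (ii) non-peaked configurations are ``bounded away from $L_{\rm pre}$''. But (ii) is tautological --- it only says such configurations do not achieve the sup defining $L_{\rm pre}$; it gives no comparison with $L_\gls$. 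To close the argument you would need a \emph{quantitative} claim of the form ``$\|\nabla\lce^C\|\ge L_{\rm pre}-\epsilon$ implies $\delta\le f(\epsilon,\lambda)$ with $f=O(e^{-\lambda})$ as $\epsilon\to0$'', and proving this amounts to characterizing the maximizer of $\|\nabla\lce^C\|$ over $\Omega$, which is precisely what the paper does directly. A second, independent gap: the task whose label set is $\{y\}$ together with the $k-1$ largest-logit classes need not lie in $\mathrm{supp}\,\tilde\mu$, so you cannot in general upper-bound the corresponding $k$-class gradient by $L_\gls$.

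The paper avoids both issues by never comparing the two gradients pointwise. It treats the $W$- and $\theta$-components separately and shows that, for any fixed $\theta$, the supremum of each component over $W\in B_\lambda^{2,1}(0)$ is attained at a $W$ with a single non-zero column (this is exactly what the $(2,1)$-constraint buys). At such a $W$ the $k$-class and $C$-class problems differ only in how many zero-logit classes sit in the softmax denominator, and both gradient norms can be computed in closed form: the $W$-component is $\sqrt{2}+O(e^{-\lambda})$ regardless of the number of classes, while the $\theta$-component is monotone \emph{decreasing} in the number of classes, so the $C$-class one is dominated by the $k$-class one. Combining these yields $L_{\rm pre}^2\le L_\gls^2 + O(e^{-2\lambda})$ without any near-maximizer analysis and without needing to select a particular admissible task.
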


\begin{proof}
We proceed by studying the norm of the gradients of $\lce^k$ with respect to $W$ and $\theta$ separately.

~\newline\noindent\paragraph{Gradient with respect to $W$}
Given a pair of $(x,y)$ and parameters $\theta$ and $W\in\R^{k \times m}$, the derivative with respect to $W_y$ (the $y$-th column of $W$) is
\eqals{
    \nabla_{W_y} ~ \lce^k(W g_\theta(x),y) = \left(\frac{e^{W_y^\top g_\theta(x)}}{\sum_{z\leq k}e^{W_z^\top g_\theta(x)}} - 1\right)~g_\theta(x).
}
The gradient with respect to a column $W_z$ for $z\neq y$ is
\eqals{
    \nabla_{W_z} ~ \lce^k(W g_\theta(x),y) = \frac{e^{W_z^\top g_\theta(x)}}{\sum_{z\leq k}e^{W_z^\top g_\theta(x)}} ~g_\theta(x).
}
We conclude that the norm of the gradient with respect to the full $W$ is
\eqals{
\nor{\nabla_W ~ \lce^k(W g_\theta(x),y)} & = \nor{g(x)}\sqrt{\left(1 - \frac{e^{W_y^\top g_\theta(x)}}{\sum_{z\leq k}e^{W_z^\top g_\theta(x)}} \right)^2 + \sum_{z\neq y}\left(\frac{e^{W_z^\top g_\theta(x)}}{\sum_{z\leq k}e^{W_z^\top g_\theta(x)}}\right)^2}   \leq \sqrt{2}
}
where we have used the fact that representations are normalized $\nor{g_\theta(x)}=1$, and, for any $W$, the two terms under the square root are  bounded by $1$. This is clear for the first term, while the second one is the squared $\ell_2$ norm of a vector in the simplex (minus the component associated to $W_y$).
Hence
\eqals{
    \sup_{(W,\theta)\in\Omega}~\nor{\nabla_{W_z} ~ \lce^k(W g_\theta(x),y)} \leq \sqrt{2}
}

Now, we provide a lower bound to the gradient norm. Let $\bar z\neq y$ and evaluate the gradient at a $W$ such that $W_z = 0$ for any $z \neq \bar z$ ($y$ included) and $W_{\bar z} = \lambda g_\theta(x)$ for $\lambda>0$, we have
\eqals{
\nor{\nabla_W ~ \lce^k(W g_\theta(x),y)} = \frac{1}{e^{\lambda} + k-1} \sqrt{(e^{\lambda} + k-2)^2 + e^{2\lambda} + k-2} = \frac{1}{1 + \frac{k-1}{e^{\lambda}} } \sqrt{\left(1 + \frac{k-2}{e^{2\lambda}}\right)^2 
 + 1 + \frac{k-2}{e^{\lambda}}}
}
Note that as $\lambda\to+\infty$, the r.h.s. of the above equation converges to $\sqrt 2$ as fast as $O(e^{-\lambda})$. Consequently, 
for any $\theta$
\eqals{
    \left|\sup_{(W,\theta)\in\Omega} ~\nor{\nabla_W ~ \lce^k(W g_\theta(x),y)} - \sqrt 2 \right| \leq O(e^{-\lambda})
}
The gradient norm converges to $\sqrt 2$ exponentially fast with respect to the upper bound on norm of $W$. In particular we have that for any number of classes $k$, 
\eqals{
    \sup_{(W,\theta)\in\Omega}~\nor{\nabla_{W_z} ~ \lce^k(W g_\theta(x),y)} \geq \sqrt{2} - O(e^{-\lambda}).
}
Since $\sqrt{2}$ is an upper bound, we have that for any $k\leq C$ and any $\theta\in\Theta$
\eqals{
    \sup_{(W,\theta)\in\Omega}~\nor{\nabla_{W_z} ~ \lce^C(W g_\theta(x),y)} - \sup_{(W,\theta)\in\Omega}~\nor{\nabla_{W_z} ~ \lce^k(W g_\theta(x),y)} \leq \sqrt{2} - (\sqrt{2} - O(e^{-\lambda})) = O(e^{-\lambda}),
}
which implies that the component of the norm of the gradient of $\lce^C$ is at most larger than the same component but for $\lce^k$ of a quantity that decreases exponentially fast with respect to $\lambda$.

~\newline\noindent\paragraph{Gradient with respect to $\theta$}
We now consider the gradient with respect to $\theta$. Given an input-output pair $(x,y)$ and a linear classifier $W$, we have
\eqals{
    \nabla_\theta \lce^k(Wg_\theta(x),y) = - \nabla_\theta g_\theta(x)^\top \left[\sum_{z\neq y} \frac{e^{W_z^\top g_\theta(x)}}{\sum_{z\leq k} e^{W_z^\top g_\theta(x)}} (W_y-W_z)\right].
}
To estimate the maximum of the norm of the gradient above with respect to the parameters $W$, we will show below that the maximum is achieved by choosing a class $\bar z\neq y$ such that $W_z=0$ for all $z \neq \bar z$ and $W_{\bar z}$ is the only non-zero column.

Let $W_y = \lambda_y ~v$ for some vector $v\in\R^m$ of norm one $\nor{v}=1$ and $\lambda_y>0$. We start by observing that the maximum length vector for the gradient above is obtained by summing vectors that are all aligned, and therefore choosing $W_z = - \lambda_z v$ for any $z\neq y$, with an appropriate scaling $\lambda_z \in \R$ is optimal. The gradient above becomes
\eqals{
    \nabla_\theta \lce^k(W g_\theta(x),y) = - \nabla_\theta g_\theta(x)^\top v ~\left[\sum_{z\neq y}\frac{(\lambda_y+\lambda_z) e^{-\lambda_z v^\top g_\theta(x)}}{\sum_{z\leq k} e^{-\lambda_z v^\top g_\theta(x)}}\right],
}
and its sup with respect to $W$ is
\eqals{
    \max_{\nor{W}_{2,1}\leq\lambda}~\nor{\nabla_\theta \lce^k(Wg_\theta(x),y)} = \max_{\nor{v}=1} \left[\nor{\nabla_\theta g_\theta(x)^\top v} ~ \left|\max_{\sum_{z\leq k}|\lambda_z|\leq\lambda}~\sum_{z\neq y}\frac{(\lambda_y+\lambda_z) e^{\lambda_z v^\top g_\theta(x)}}{\sum_{z\leq k} e^{-\lambda_z v^\top g_\theta(x)}}\right|\right],
}
where we have used the fact that the constraint $\nor{W}_{2,1}\leq\lambda$ on the linear parameters $W$ corresponds to the constraint $\sum_{z\leq k} |\lambda_z| \leq \lambda$.

We first note that to achieve the maximum, $\lambda_z\geq0$ for all $z\neq y$, otherwise the term $\lambda_y+\lambda_z$ would be smaller than $\lambda_y + |\lambda_z|$ (note that the term in the exponential is not affected by the sign of $\lambda_z$, since we can choose $v$ such that $v^\top g_\theta(x)$ has either sign and $\nor{\nabla_\theta g_\theta(x)^\top v} = \nor{-\nabla_\theta g_\theta(x)^\top v}$. This implies that we do not need the absolute value on the term
\eqals{
\max_{\sum_{z\leq k}|\lambda_z|\leq\lambda}~\sum_{z\neq y}\frac{(\lambda_y+\lambda_z) e^{\lambda_z v^\top g_\theta(x)}}{\sum_{z\leq k} e^{-\lambda_z v^\top g_\theta(x)}}.
}
Additionally, we note that the maximum above is achieved for any set of $(\lambda_z)_{z\leq k}$ such that $\lambda_{\bar z}> 0$ for some $\bar z \neq y$ and $\lambda_z = 0$ for all $z\neq \bar z$. This follows by noting that 
\eqals{
\max_{\sum_{z\leq k}\lambda_z\leq\lambda}~\sum_{z\neq y}\frac{(\lambda_y+\lambda_z) e^{-\lambda_z v^\top g_\theta(x)}}{\sum_{z\leq k} e^{-\lambda_z v^\top g_\theta(x)}} & \leq \max_{\sum_{z\leq k}\lambda_z\leq\lambda}~\sum_{z\neq y}\frac{e^{-\lambda_z v^\top g_\theta(x)}}{\sum_{z\leq k} e^{-\lambda_z v^\top g_\theta(x)}}~ (\lambda_y+ \max_{z\neq y} \lambda_z).
}
Now, both $\max_{z\neq y} \lambda_z$ and $
\sum_{z\neq y}\frac{e^{-\lambda_z v^\top g_\theta(x)}}{\sum_{z\leq k} e^{-\lambda_z v^\top g_\theta(x)}}$ are maximized by choosing all $\lambda_z$ to be equal to zero except for one. In particular the inequality above becomes an equality if $\lambda_{\bar z} = \lambda$ for some $\bar z \neq y$ and $\lambda_z = 0$ for any $z\neq \bar z$ (including $y$). Plugging this estimation of the maximum in the gradient norm derived above, we have
\eqals{
    \max_{\nor{W}_{2,1}\leq\lambda}~\nor{\nabla_\theta \lce^k(Wg_\theta(x),y)} = \max_{\nor{v}=1} \nor{\nabla g_\theta(x)^\top v} \frac{\lambda e^{-\lambda v^\top g_\theta(x)}}{e^{-\lambda v^\top g_\theta(x)} + k-1}
}
We note now that, for any $\lambda$ and $v\in\R^m$, the quantity that we are maximizing in the equation above is strictly decreasing with respect to $k$. This means in particular that
\eqals{\label{eq:lip-gradient-theta}
    \max_{(W,\theta)\in\Omega}~\nor{\nabla_\theta \lce^k(Wg_\theta(x),y)} \leq \max_{(W,\theta)\in\Omega}~\nor{\nabla_\theta \lce^{C}(Wg_\theta(x),y)}
}

~\newline\noindent\paragraph{Putting the two together}
Let us consider the norm squared of $\lce^C$. We have
\eqals{
    L_{\rm pre} = \sup_{(W,\theta)\in\Omega}~\nor{\nabla_{W,\theta} ~ \lce^C(W g_\theta(x),y)}^2 & = \sup_{(W,\theta)\in\Omega}~ \nor{\nabla_{W} ~ \lce^C(W g_\theta(x),y)}^2 + \nor{\nabla_{\theta} ~ \lce^C(W g_\theta(x),y)}^2 \\
    & \leq \sup_{(W,\theta)\in\Omega}~ \nor{\nabla_{W} ~ \lce^k(W g_\theta(x),y)}^2 + O(e^{-2\lambda}) + \nor{\nabla_{\theta} ~ \lce^k(W g_\theta(x),y)}^2 \\
    & = \sup_{(W,\theta)\in\Omega}~ \nor{\nabla_{W,\theta} ~ \lce^k(W g_\theta(x),y)}^2 + O(e^{-2\lambda})= L_{\gls}^2 + O(e^{-2\lambda}),
}
where for the inequality we have used the fact that, for both gradients (with respect to $W$ or $\theta$), the maximum with respect to $W$, for a fixed $\theta$ is achieved by selecting a $W$ that is zero along all directions but one. Since $\sqrt{a+b} \leq \sqrt{a} + \sqrt{b}$ for any $a,b\geq0$, we conclude that $L_{\rm pre} \leq L_\gls + O(e^{-\lambda}),$
as required.
\end{proof}

~\newline\noindent The theorem above shows that the Lipschitz constant for the (global) multi-class classification risk is at most larger than that for GLS by an amount that decreases exponentially fast as $\lambda$ increases. From regularization theory we know that $\lambda$ grows proportionally to the number of samples observed, hence we can expect that for all practical purposes $L_{\rm pre}\leq L_{\gls}$.

\section{Experimental Details}
In this section we specify the datasets and hyperparameter settings for the experiments in the main body. The training follow the same steps regardless of dataset, we specify the available hyperparameters for each step below. We denote the steps (in sequence as they appear) of \cref{alg:meta_learner} by \textsc{RepLearn}, \textsc{LearnLabeler}, \textsc{PreTrain}, \textsc{MetaLearn}, and evaluation by \textsc{Eval}. The code repository with implementation details can be found at \url{https://github.com/isakfalk/mela}. In the code base we use PyTorch \cite{paszke2017automatic}, scikit-learn \cite{pedregosa2011scikit}, numpy \cite{harris2020array}, matplotlib \cite{hunter2007matplotlib} and umap \cite{mcinnes2018umap}.

\subsection{Datasets}

\paragraph{\textit{mini}/\textit{tiered}ImageNet}
We use the standard \mimg{} and \timg{} dataset.\\

\paragraph{\textit{mini}-60 and \textit{tiered}-780}
We provide code for generating the datasets of \textit{mini}-60 and \textit{tiered}-780 using the ILSVRC2012 ImageNet dataset and global labels provided (\url{https://www.image-net.org/challenges/LSVRC/2012/index.php}). Code and instructions for creating these datasets can be found in \url{https://github.com/isakfalk/mela}. \\

\paragraph{Meta-Dataset}
We use the official Meta-Dataset \cite{triantafillou2019meta} \href{https://github.com/google-research/meta-data}{implementation} and pre-process them according to the instructions provided.\\

\subsection{Model Details}
\label{sec:app_model}

\paragraph{Combining \laml{} with Task-specific Adapters}
\label{para:tsa}
In \laml{}, the prediction model $f=\ridge\cdot\embd$ for a task $D$ consists of the embedding model $\embd$ learned via our proposed approach, and a linear classifier $\ridge$ (e.g., ridge regression or logistic regression) derived from $D$. This model $f$ can further fine-tuned using task-specific adapters (TSA)~\cite{li2022cross}, which further adapts the embedding model $\embd$ to better match the data distribution of $D$. Concretely, TSA inserts residual layers to each level of $\embd$. While keeping the original parameters fixed, TSA then fine-tunes the new layers and classifier $\ridge_t$ via a cross-entropy loss on $D$. The two strategies are therefore fully compatible, with \laml{} recovering the labels and then running TSA as per the original paper \cite{li2022cross} with some minor modifications (e.g. the inclusion of Dropblocks as described in the {\bfseries Backbone Architecture} paragraph in this section.

\paragraph{Optimization}
\label{para:optimization}
We use two instances of optimization algorithms, relying on the optimization library of PyTorch
\begin{itemize}
    \item \textsc{SGD}: SGD with an initial learning rate of 0.05, weight decay factor of \(0.0005\) and momentum of \(0.9\)
    \item \textsc{AdamW}: AdamW \cite{loshchilov2017decoupled} with learning rate of 0.0001, weight decay factor of \(10^{-6}\).
\end{itemize}
For each optimization algorithm we use the torch multi-step learning rate scheduler which anneals the learning rate of the optimization algorithm by \(\gamma = 0.1\) at selected epochs in \texttt{lr\_decay\_epochs}.
\begin{itemize}
    \item \textsc{MultiStepLR}: Learning rate annealing scheduler, which multiplies the learning rate by \(\gamma\) at the beginning of epochs in the list \texttt{lr\_decay\_epochs}.
\end{itemize}

\paragraph{Augmentation}
\label{para:augmentation}
We use two instances of augmentation (and one option of no augmentation)
\begin{itemize}
    \item \textsc{DataAug}: Data augmentation where we use a pipeline of
    \begin{enumerate}
        \item Random cropping using a shape of \(84 \times 84\) with padding of \(8\)
        \item Color jittering with the PyTorch arguments of (\texttt{brightness=0.4, contrast=0.4, saturation=0.4})
        \item Randomly flip the image horizontally.
        \item Normalization: (channel-wise) using ImageNet sample channel mean and standard deviation for ImageNet type datasets, min-max scaling to \([-1, 1]\) for Mixed and H-aircraft datasets
    \end{enumerate}
    \item \textsc{RotateAug}: Rotation-class augmentation as laid out in \cref{sec:method_rot} together with \textsc{DataAug}
    \item \textsc{None}: No augmentation.
\end{itemize}

\paragraph{Backbone Architecture}
\label{para:architecture}
We use ResNets \cite{he2016deep} for the backbone throughout the experiments. We observe that the inclusion of Dropblocks~\cite{ghiasi2018dropblock} improves model generalization.
\begin{itemize}
    \item \textsc{ResNet12}: ResNet with block sequence \([1, 1, 1, 1]\), using adaptive average pooling, drop-blocks for the final 2 ResNet layers and a drop rate of 0.1, with output dimension being \(640\)
    \item \textsc{ResNet18}: ResNet with block sequence \([1, 1, 2, 2]\), using adaptive average pooling, drop-blocks for the final 2 ResNet layers and a drop rate of 0.1, with output dimension being \(640\)
\end{itemize}

\begin{table}[H]
\caption{Hyperparameters for all datasets.}
\label{tab:hyperparams}
\centering
    \begin{tabular}{l|c|c|c|c|c|c|} 
        \toprule
         & \mimg{} & \timg{} & mini-60 & tiered-780 & Mixed & H-aircraft\\
        \midrule
        \multicolumn{7}{l}{\textsc{RepLearn}}\\
        \midrule
        Architecture & \textsc{ResNet12} & \textsc{ResNet12} & \textsc{ResNet12} & \textsc{ResNet12} & \textsc{ResNet12} & \textsc{ResNet12}\\
        Augmentation & \textsc{None} & \textsc{None} & \textsc{None} & \textsc{None} & \textsc{None} & \textsc{None}\\
        Epochs & 30 & 40 & 30 & 40 & 50 & 50 \\
        \texttt{lr\_decay\_epochs} & [20, 25] & [28, 36] & [20, 25] & [28, 36] & [30, 42] & [30, 42]\\
        $T_{\mathrm{tasks}}$ (if not \textsc{GFSL}) & 2800 & 2800 & 2800 & 2800 & 2800 & 2800\\
        \midrule
        \multicolumn{7}{l}{\textsc{LearnLabeler}}\\
        \midrule
        Architecture & \textsc{ResNet12} & \textsc{ResNet12} & \textsc{ResNet12} & \textsc{ResNet12} & \textsc{ResNet12} & \textsc{ResNet12}\\
        Augmentation & \textsc{None} & \textsc{None} & \textsc{None} & \textsc{None} & \textsc{None} & \textsc{None}\\
        q & 3 & 3.5 & 4.5 & 4.5 & 3.5 & 3.5\\
        $K_{\mathrm{init}}$ & 600 & 1200 & 700 & 1100 & 400 & 400\\
        \midrule
        \multicolumn{7}{l}{\textsc{Pretrain}}\\
        \midrule
        Architecture & \textsc{ResNet12} & \textsc{ResNet18} & \textsc{ResNet12} & \textsc{ResNet12} & \textsc{ResNet12} & \textsc{ResNet12}\\
        Augmentation & \textsc{RotateAug} & \textsc{RotateAug} & \textsc{RotateAug} & \textsc{RotateAug} & \textsc{RotateAug} & \textsc{RotateAug}\\
        \midrule
        \multicolumn{7}{l}{\textsc{MetaLearn}}\\
        \midrule
        Epochs & 3 & 3 & 3 & 3 & 3 & 3\\
        \bottomrule
    \end{tabular}
\end{table}

\paragraph{Residual Adapters for Meta Fine-Tuning}
In \cref{sec:method_finetune}, we introduced a residual adapter for meta fine-tuning. The learnable network $h$ is a three-layer MLP with 
\textsc{ResNet\{12/18\}+ResFC}: MLP with residual connection and layer-normalization applied to the output. Both the input and output dimensions are the same as the feature representation from either \textsc{ResNet12} or \textsc{ResNet18} backbone.\\

\paragraph{Learning the Similarity Measure (\textsc{RepLearn})}
\label{para:replearn}
For training embedding \(\embd^{\mathrm{sim}}\), when given a task \(D = (S \cup Q)\) we one-hot encode the outputs and scale them using \(f(y) = 2y - 1\). We get the classifier \(w(\embd^{\mathrm{sim}}(S))\) using \cref{eq:closed-form-solver} on the embedded support set \(\embd^{\mathrm{sim}}(S)\) (we add a column of ones to the embeddings for a bias term) with a regularization strength of \(\lambda_{\mathrm{MetaLS}} = 0.001\). As an inner loss we use \(\ell = \ell_{\mathrm{FS}}\) where \(\ell_{\mathrm{FS}}\) is the few-shot loss using mean-squared error inner loss \cref{eq:closed-form-solver}. We train for a fixed number of epochs, where each epoch is a full sweep over the meta-train set in the \textsc{GFSL} setting or some predefined number of tasks \(T_{\mathrm{tasks}}\) in the standard setting. Number of tasks in each batch is set to 1. We use meta-validation set for early stopping and model selection.\\

\paragraph{Global Label Inference (\textsc{LearnLabeler})}
\label{para:learn-labeler}
Given a trained backbone \(\embd^{\mathrm{sim}}\) we use the clustering algorithm of \cref{para:clustering} with the hyperparameters \(q\) and \(K_{\mathrm{init}}\) where \(q\) is the pruning aggression parameter and \(K_{\mathrm{init}}\) is the initial number of centroids on the same meta-train few-shot dataset on which \(\embd^{\mathrm{sim}}\) was trained, which gives rise to centroids \(G\) and in extension the inferred global labels for standard multi-class classification.\\

\paragraph{Pre-Training via Multi-class Classification (\textsc{PreTrain})}
\label{para:pre-train}
Given a flat supervised dataset with inferred labels, we train a backbone \(\embd^{\mathrm{\rm pre}}\) using the cross-entropy loss as in \cref{eq:std_classify} with \textsc{SGD} and one of the data augmentation strategies outlined in \cref{para:augmentation}. We train for a set number of epochs, where each epoch is a full sweep over the \textit{flattened} meta-train set in the \textsc{GFSL} setting or some predefined number of tasks \(T_{\mathrm{tasks}}\) (inferred labels, flattened) in the standard setting. For the Oracles we use the same procedure on the full flat supervised dataset with the ground-truth labels. For the H-Aircraft dataset with multiple ground-truth labels, we use the semantic softmax \cite{ridnik2021imagenet} with augmentation \textsc{DataAug}. We use meta-validation set for early stopping and model selection.\\

\paragraph{Meta Fine-tuning (\textsc{MetaLearn})}
\label{para:meat-learn}
We adapt the pre-trained representation \(\embd^{\mathrm{\rm pre}}\) towards $\embd^{*}$ by combining the original backbone with the residual adapter. We use augmentation \textsc{DataAug} without rotation and optimize \cref{eq:meta-representation-model} using \textsc{AdamW}. We train for a set number of epochs, where each epoch is a sweep over the meta-train sets. Number of tasks per batch i set to 1. We use meta-validation set for early stopping and model selection. We perform 5 trials for each experiment setup for reporting performance standard deviation.\\

\paragraph{\textsc{Model Evaluation}}
\label{para:eval}
During model evaluation (either using meta-validation or meta-testing set), we obtain the embedding of each task sample using the trained feature extractor. Each sample embedding is normalized to unit-length before being passed to the classifier $\ridge(\cdot)$. We follow \cite{tian2020rethinking} and uses logistic regression from Scikit-learn as the classifier. We set the regularization strength as 1.0 for pre-trained feature extractor $\embd^{\mathrm{\rm pre}}$ and 0.001 for the fine-tuned feature extractor $\embd^*$.\\

\paragraph{Hyper-parameters for \laml{}} The hyper-parameters and the values used in the experiments are listed in \cref{tab:hyperparams}.

\paragraph{Hyper-parameters for previous methods} For standard settings including \mimg{}, \timg{} and Meta-Dataset, we use the recommended hyper-parameters from each method's original paper and directly report their original performance where appropriate.

For \gfsl{}, we performed hyper-parameter search for the evaluated methods such that they could perform well when trained from random initialization. We list the key hyper-parameter values for the evaluated methods below, with the rest set to their default values from their official implementation:

\begin{itemize}
    \item \textbf{ProtoNet}: $\textrm{lr}=0.05, \textrm{epochs}=30, \textrm{lr\_decay\_epochs}=[20, 25], \textrm{temperature}=0.1$.
    \item \textbf{MatchNet}: $\textrm{lr}=0.05, \textrm{epochs}=30, \textrm{lr\_decay\_epochs}=[20, 25]$.
    \item \textbf{R2D2}: $\textrm{lr}=0.05, \textrm{epochs}=30, \textrm{lr\_decay\_epochs}=[20, 25], \textrm{weight\_decay}=0.0005. \lambda=0.001$.
    \item \textbf{DeepEMD}: $\textrm{lr}=0.01, \textrm{lr}_{sfc}=0.1, \textrm{epochs}=50, \textrm{weight\_decay}=10^{-6}, \textrm{num\_patch}=9$.
    \item \textbf{FRN}: $\textrm{lr}=0.1, \textrm{epochs}=20, \textrm{weight\_decay}=0.0005. \gamma=0.1$.
    \item \textbf{FEAT}: $\textrm{lr}=0.05, \textrm{lr}_{mul}=1, \textrm{epochs}=30, \textrm{lr\_decay\_epochs}=[20, 25], \textrm{temperature}_1=64, \textrm{temperature}_2=32$.
\end{itemize}

\subsection{Computational Cost}
\label{sec:app_time}

We compare the training wall-time of \laml{} against DeepEMD, FEAT and FRN, three state-of-the-art methods for \fsl{}. We divide the running time into three stages, including global label inference (only applicable to \laml{}), pre-training and meta fine-tuning. \cref{tab:mela-timings} shows that \laml{}'s time complexity is comparable to FEAT and FRN while outperforming DeepEMD. We note that DeepEMD's base learner solves a computationally expensive objective function, resulting in slow training. For \laml{}, the running time is dominated by pre-training due to the proposed rotational augmentation from \cref{sec:method_rot}, while the meta fine-tuning stage is very lightweight. In particular, the pre-trained embedding is fixed and only a small adapter is optimized. In contrast, the other three algorithms (DeepEMD, FEAT and FRN) spends significantly more time during meta fine-tuning due to more complex model architecture and the need to optimize the entire model.

\begin{table}[H]
\caption{Comparing the wall-clock time for different \fsl{} methods}
\label{tab:mela-timings}
\centering
    \begin{tabular}{l|cccc} 
        \toprule
        & \multicolumn{4}{c}{Time (s)}\\
        & \textsc{Global Label Inference} & \textsc{Pre-Train} & \textsc{Meta Fine-tuning} & Total\\
        \midrule
        Mixed Dataset & & & & \\
        \midrule
        FRN & n.a & 4718 & 3200 & 7918\\
        FEAT & n.a & 1258 & 4800 & 6058\\
        DeepEMD & n.a & 1258 & 28950 & 30208\\
        MeLa & 1436 & 5031 & 72 & 6539\\
        \midrule
        \mimg{} & & & &\\
        \midrule
        FRN & n.a & 5726 & 5910 & 11636 \\
        FEAT & n.a & 1527  & 6900 & 8427\\
        DeepEMD & n.a & 1527  & 27810 & 29337\\
        MeLa & 1724 & 6108 & 144 & 7976 \\
        \midrule
        \timg{} & & & & \\
        \midrule
        FRN & n.a & 29863 & 41100 & 70963 \\
        FEAT & n.a & 12798 & 60300 & 73098\\
        DeepEMD & n.a & 12798 & 422450 & 435248\\
        MeLa & 20040 & 51195 & 144 & 71379 \\
        \bottomrule
    \end{tabular}
\end{table}

\end{document}